%% file: main_NDSS.tex
\documentclass[conference]{IEEEtran}

\ifCLASSINFOpdf
\else
\fi
\usepackage{amsmath,amssymb,amsfonts} % Math symbols
\usepackage{enumitem}
\usepackage{graphicx}    
\usepackage{amsthm}
\usepackage[ruled,vlined]{algorithm2e}
\usepackage{cite}
\usepackage{tabularx}
\newcommand{\components}{\mathcal{C}}
\usepackage{textcomp}                 % Text symbols
\usepackage{xcolor}                   % Colors
\usepackage{url}                      % URLs
\usepackage{booktabs}                 % Better tables
\usepackage{tabularx}                 % For tables with adjusted column widths
\usepackage{multirow} 
\usepackage{tcolorbox}
\usepackage{subcaption}               % Subfigures/subtables
\usepackage{balance}                  % Balance columns on last page
\usepackage{tikz}
\usetikzlibrary{matrix, positioning, decorations.pathreplacing, calligraphy}
\usepackage{caption} % Already loaded by IEEEtran? Check needed.
\usepackage{cite}     % Handle citations nicely

\usepackage[hidelinks]{hyperref}       % Hyperlinks (hidelinks removes boxes)
\usepackage[capitalize]{cleveref}    % Smart references (e.g., Fig. 1, Section II) - MUST be loaded after hyperref generally

\newcommand{\MI}{\mathrm{I}} % Mutual Information (Math symbol)
\newcommand{\Prob}{\mathbb{P}} % Probability P (using blackboard bold)
\newcommand{\setW}{\mathcal{W}}

\newcommand{\PFL}{\textsf{PFL}}

\newcommand{\FedAvg}{\textsf{FedAvg}}

\newcommand{\FedProx}{\textsf{FedProx}} % Example Algo
\newcommand{\Scaffold}{\textsf{SCAFFOLD}}
\def\eg{\emph{e.g.}}
\def\ie{\emph{i.e.}}
\def\cf{\emph{cf.}}
\newcommand{\etal}{\textit{et al.}\xspace}

\newtheorem{theorem}{Theorem}
\newtheorem{lemma}[theorem]{Lemma}

\newtheorem{definition}{Definition}

\begin{document}
%
% paper title
% Titles are generally capitalized except for words such as a, an, and, as,
% at, but, by, for, in, nor, of, on, or, the, to and up, which are usually
% not capitalized unless they are the first or last word of the title.
% Linebreaks \\ can be used within to get better formatting as desired.
% Do not put math or special symbols in the title.
\title{Choose Wisely and Privately: Proactive Client Selection for Fair and Efficient Federated Learning}

% author names and affiliations
% use a multiple column layout for up to three different
% affiliations
%\author{\IEEEauthorblockN{Anonymous submission}}

\author{
\IEEEauthorblockN{
Adda Akram Bendoukha\IEEEauthorrefmark{1},
Heber Hwang Arcolezi\IEEEauthorrefmark{2},
Nesrine Kaaniche\IEEEauthorrefmark{1},
Aymen Boudguiga\IEEEauthorrefmark{3}
}

\IEEEauthorblockA{\IEEEauthorrefmark{1}
Samovar, Télécom SudParis, Institut Polytechnique de Paris, France\\
Emails: adda-akram.bendoukha@telecom-sudparis.eu,
kaaniche.nesrine@telecom-sudparis.eu
}

\IEEEauthorblockA{\IEEEauthorrefmark{2}
ÉTS Montréal\\
Email: heber.hwang-arcolezi@etsmtl.ca
}

\IEEEauthorblockA{\IEEEauthorrefmark{3}
CEA-LIST\\
Email: aymen.boudguiga@cea.fr
}
}

\maketitle

% As a general rule, do not put math, special symbols or citations
% in the abstract
\begin{abstract}
Federated Learning enables collaborative model training across decentralized data sources without data transfer. Averaging-based FL is limited by the presence of non-IID data, which negatively impacts convergence speed and final model accuracy. Conventional alternatives suffer from significant inefficiency. Clients with noisy or highly heterogeneous data contribute expensive gradient computations that are either discarded or heavily down-weighted before aggregation. These reactive approaches waste computational resources, require more communication rounds and result in unnecessary privacy exposure. 
In this paper, we propose a proactive client selection framework that aims to find an optimal federation of clients whose combined data match utility and fairness requirements before training begins. Our method relies on mutual information computed from differentially private contingency tables to quantify the relevance of cross-feature correlations in the union dataset. We introduce a Potential Federation Loss (PFL) over the set of fixed-size federations, which balances two objectives: maximizing collective data utility while ensuring fair cross-features correlations to prevent group unfairness. Client selection is expressed as an optimal subset search problem over the PFL objective, which we solve using simulated annealing under strong differential privacy guarantees for clients' local statistics. Experimental results on four benchmarks show faster, fairer, and more accurate models trained on optimally found federations, compared to uniform sampling, even when state-of-the-art adaptive aggregation or sampling strategies are employed.
\end{abstract}

\IEEEpeerreviewmaketitle

\input{main_text}

\bibliographystyle{IEEEtran}
\bibliography{biblio} 

\appendices
\input{appendices}

\end{document}

%% file: main_text.tex
\section{Introduction}
\label{sec:intro}
Federated Learning (FL) \cite{mcmahan2023communication} enables collaborative training of machine learning models across distributed data (\eg{}, smartphones, hospitals, organizations) without requiring direct data sharing. While promising for privacy, the standard \FedAvg{} algorithm \cite{mcmahan2023communication}, and its numerous extensions \cite{kairouz2021advances,li2020federated,wang2020tackling} face multiple challenges primarily stemming from disparities in clients' data. Specifically, statistical heterogeneity, where data distributions deviate significantly across clients, is pervasive and can severely degrade the final model performance and convergence speed \cite{kairouz2021advances, li2020federated, zhao2018federated}. Additionally, 
models trained via FL can inherit and even exacerbate discriminative patterns present in the local data \cite{kairouz2021advances, chang2023bias}, leading to performance disparities between demographic groups defined by sensitive attributes like race or sex. 

Existing research addresses these challenges by selectively discarding, down-weighting, or under-sampling certain client updates during the federated learning process. Robust aggregation methods aim to mitigate the impact of non-IID data \cite{li2020federated, pillutla2022federated, blanchard2017machine}. Fairness-aware FL algorithms incorporate fairness constraints or regularization terms into the training objective \cite{ezzeldin2022fair, bendoukha2025fade, zeng2022improving}. 
However, these approaches are largely \emph{reactive}, operating on a fixed pool of participating clients.  
In practice, the client pool may include participants with limited utility, redundant information, or updates derived from low-diversity or biased data.
Consequently, such methods can lead to inefficient training and unnecessary privacy exposure of client information. Specifically, even though local data is never transferred, privacy in conventional FL remains limited. Vulnerabilities like model inversion or membership inference attacks remain largely possible in the federated setting \cite{du2025sokgradientleakagefederated, melis2018exploiting, Rigaki_2023}. 
Engaging a larger than necessary set of clients increases the privacy risk surface, without added value in terms of predictive utility, and can even introduce new convergence challenges due to biased or noisy data.

Privacy in FL is addressed through secure aggregation techniques such as Fully Homomorphic Encryption (FHE) \cite{sav2021poseidon, choffrut2024sable, bendoukha:hal-04782394, bendoukha2025unveiling} and Multi-Party-Computation \cite{bonawitz2016practical}. Setting up these cryptographic methods in FL often leads to a quadratic cost in the number of involved clients. Concrete examples involve collaborative decryption in threshold FHE \cite{sav2021poseidon} and the setup of pairwise masking schemes \cite{bonawitz2016practical}. Such scalability limitations further motivate the need for proactive and data-driven participation policies in FL.

Additionally, existing reactive aggregation and sampling strategies are inherently difficult to reconcile with secure aggregation protocols. These approaches typically rely on inspecting, filtering, or reweighing individual updates based on their observed contribution to the global objective. However, under encryption or masking, individual updates are cryptographically obfuscated, which makes any server-side inspections extremely expensive\footnote{For instance, Choffrut \etal{}~\cite{choffrut2024sable} implemented median-based aggregation rules for FHE encrypted updates. They report aggregation times exceeding one minute for a single coordinate-wise trimmed mean operation with 10 clients.}. 
As a result, while simple averaging remains practical under secure aggregation, more sophisticated reactive strategies become computationally infeasible.
.

These observations motivate \emph{a shift from reactive to proactive} approaches in FL. 
Instead of filtering or down-weighting updates after they are computed and securely transferred, we propose to identify, in advance, a subset of clients whose participation is both efficient and aligned with fairness and utility objectives. 
Concretely, we ask: \emph{can we construct a federation in which every participating client contributes meaningfully, while preserving privacy throughout the selection process?} 
Our goal is to enable such proactive client selection in a privacy-preserving manner, thereby reducing unnecessary computation, limiting privacy exposure, and improving overall training efficiency.

To this end, we introduce a novel approach for proactive client selection that identifies a federation (\ie{}, a subset of clients) whose combined data properties promote both high predictive utility and group fairness (\cf{} Appendix \ref{sec:appendix_fairness_metrics}). 
Our method builds on principles from Exploratory Data Analysis (EDA) for machine learning, where statistical characteristics of data provide insights into its suitability for supervised tasks \cite{guyon2003introduction, zha2023data, torralba2011unbiased}. In particular, we leverage a mutual information-based analysis of data suitability \cite{battiti1994using, hanchuan2005feature, fleuret2004fast}. 
Unlike linear correlation measures, mutual information captures arbitrary statistical dependencies, making it particularly well-suited for

    (i) assessing feature utility for non-linear models, and 

    (ii) flagging potential fairness concerns related to Demographic Parity~\cite{dwork2011fairness}, and Equal Opportunity~\cite{hardt2016equality} principles by measuring the amount of information the sensitive attribute holds about the target or other non-sensitive features, thereby identifying direct bias and potential non-linear proxy effects.

\begin{figure*}[t]
    \centering
    \includegraphics[width=0.8\linewidth]{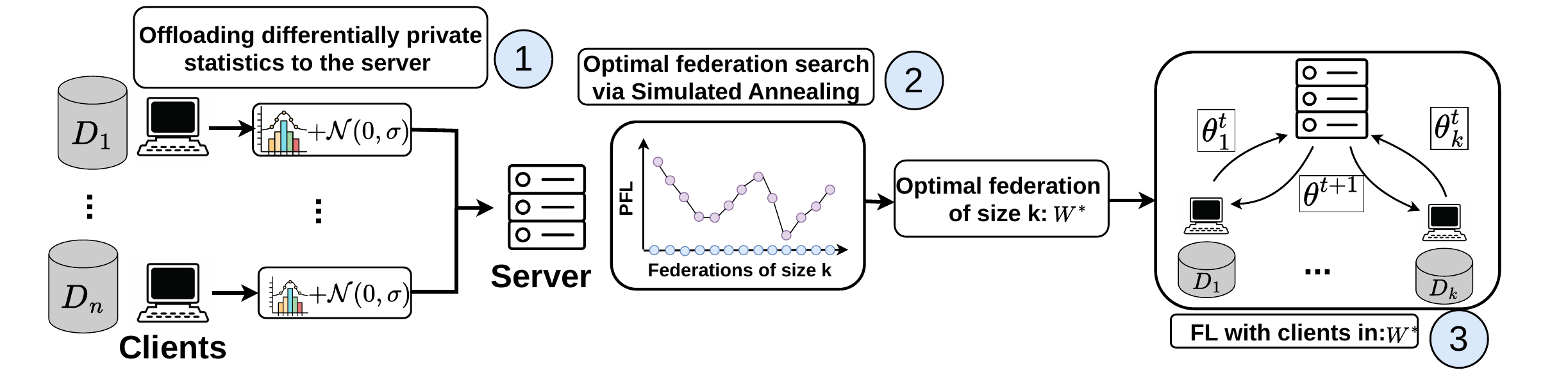}
    \caption{Overview of our approach in three steps. First step involves computing cross-features contingency tables, and adding Gaussian noise to achieve ($\epsilon, \delta$)-DP. The second step involves an optimal federation search using the noisy statistics. Finally, \FedAvg-based FL is performed with the optimal federation $W^{*}$}
    \label{fig:approach_overview}
\end{figure*}

\textbf{Contributions.} The contributions of this work are:

\begin{enumerate}[leftmargin=*]

\item We introduce a \emph{Potential Federation Loss (PFL)}, a novel objective function that evaluates the quality of a candidate federation by jointly modelling predictive utility and fairness risks using mutual information. The objective captures direct bias, proxy bias, feature redundancy, and predictive signal, allowing the identification of federations that balance performance and group fairness.

\item We introduce a \emph{privacy-preserving proactive client selection framework} that constructs an optimal federation of clients before the training phase. Specifically, this search problem is expressed as an optimal subset search over the \PFL{} objective, and solved via Simulated Annealing.

\item We extensively evaluate our approach on American Community Survey datasets~\cite{ding2022retiringadultnewdatasets}, in which U.S. states serve as clients. We compare our method to five reactive aggregation and sampling baselines. The results show that models trained on optimal federations using standard \FedAvg{} outperform reactive strategies applied to randomly selected federations, from fairness and utility aspects.  

\item We provide open-source access to our implementation of the search algorithms and the FL training. We also report all optimally found solutions for full reproducibility at \url{https://github.com/Akram275/FL_private_proactive_selection}.   

\end{enumerate}

\section{Background}
\label{sec:background}

\subsection{Federated learning}
Federated Learning \cite{mcmahan2017fedavg} is a distributed optimization paradigm that enables a central server to train a global model $\theta$ over data distributed among $n$ clients, denoted by the set $\setW = \{1, \dots, n\}$, without any centralized access to raw data $\{D_i\}_{i \in \setW}$.
The goal is to solve a global optimization problem:
\begin{equation*}
    \min_{\theta \in \mathbb{R}^d} F(\theta) = \sum_{i=1}^{n} \left( \frac{|D_i|}{|D|} \right) F_i(\theta)
\end{equation*}
where $|D| = \sum |D_i|$ is the total number of samples, and $F_i(\theta) = \mathbb{E}_{(x,y) \sim D_i} [\ell(\theta; x, y)]$ is the local objective function for client $i$ based on a loss $\ell(\cdot)$.

In the standard \FedAvg{} \cite{mcmahan2017fedavg} algorithm, the server broadcasts the current global model $\theta^{(t)}$ to clients. Each client $i$ performs several Stochastic Gradient Descent (SGD) steps to minimize $F_i(\theta)$ and sends the update $\theta^{(t)}_{i}$ back to the server for aggregation via a coordinate-wise averaging of the updates. 

\subsection{Differential privacy}
Differential Privacy (DP)~\cite{dwork2006differential} provides a mathematical framework for quantifying and achieving privacy for the release of a statistical analysis over a dataset. In particular, it ensures that the output of a randomized algorithm $\mathcal{M}$ on a dataset does not significantly depend on the presence or absence of any single individual's record in the dataset.

\begin{definition}[$(\epsilon, \delta)$-Differential Privacy]
A randomized algorithm $\mathcal{M}$ satisfies $(\epsilon, \delta)$-DP if for all adjacent datasets $D, D'$ (differing by one record) and for all sets of outcomes $S \subseteq \text{Range}(\mathcal{M})$:
\begin{equation*}
    \Prob[\mathcal{M}(D) \in S] \le e^\epsilon \cdot \Prob[\mathcal{M}(D') \in S] + \delta
\end{equation*}
where $\epsilon$ is the privacy budget and $\delta$ is a failure probability.
\end{definition}

A standard method to achieve DP for vector-valued queries is the Gaussian Mechanism \cite{dwork2014algorithmic}, which adds noise calibrated to the $L_2$-sensitivity of the query function $q$ defined as: $\Delta_2(q) = \max_{D, D'} \|q(D) - q(D')\|_2$. Specifically, $\mathcal{M}(D) = q(D) + \mathcal{N}(0, \sigma^2 I)$ satisfies $(\epsilon, \delta)$-DP if $$\sigma \ge \frac{\sqrt{2 \ln(1.25/\delta)} \cdot \Delta_2(q)}{ \epsilon}$$

\textbf{Rényi Differential Privacy (RDP):}
To handle the composition of multiple privacy mechanisms (\eg{}, when releasing multiple local statistics), we employ Rényi Differential Privacy (RDP)~\cite{Mironov_2017}. RDP is a relaxation of pure DP based on the Rényi divergence $\mathcal{D}_\alpha$. A mechanism satisfies $(\alpha, \rho)$-RDP if: 
$$\mathcal{D}_\alpha(\mathcal{M}(D) \| \mathcal{M}(D')) \le \rho$$
RDP provides tighter composition bounds than standard $(\epsilon, \delta)$ analysis \cite{Mironov_2017}, allowing for more accurate noise calibration when releasing multiple high-dimensional statistics.

\section{Related work}
\label{sec:related_work}

Client selection is a central concern in federated learning, primarily due to statistical heterogeneity, limited communication budgets, and demographic fairness concerns. The intuition is that if clients with the best data update the model more often than low-quality or noisy data-owners, convergence is likely to be faster, and leads to an overall better utility model. Early FL frameworks such as \FedAvg{} \cite{mcmahan2017fedavg} rely on uniform sampling of clients at each communication round, which provides scalability without utility and group fairness considerations as soon as strong statistical discrepancies are observed across clients~\cite{mcmahan2017fedavg, kairouz2021advances}. Subsequent works \cite{cho2020bandit, qi2023fedsampling, zhao2018noniid, cho2020client} introduced adaptive client sampling mechanisms that prioritize clients based on the observed gradient utility, local loss \cite{ribero2020communication}, or local data volume \cite{qi2023fedsampling} in order to accelerate convergence. The idea is to progressively bias the sampling probabilities from uniform to more targeted sampling as training progresses.  

Beyond sampling, other strategies rely on constrained local training or adaptive aggregation mechanisms to ensure convergence under heterogeneous conditions. For instance, SCAFFOLD~\cite{karimireddy2021scaffold} introduces control variates to correct for client drift induced by skewed or noisy local data, allowing the model to converge more reliably. FedProx~\cite{li2020federated} constrains local training by introducing a proximal regularization term that penalizes large deviations between local updates and the global model.

Regarding fairness-aware FL, Ezzeldin \etal{}~\cite{ezzeldin2022fair} introduced FairFed, which combines local fairness pre-processing methods with a fairness-aware aggregation rule that penalizes updates that deviate from a global fairness reference. Similarly, Zeng \etal{}~\cite{zeng2022improving} propose FedFB, which adapts the centralized FairBatch \cite{roh2021fairbatch} strategy to federated settings by adjusting local sampling weights to balance group representation during training. Bendoukha \etal{}~\cite{bendoukha2025fade} introduced two fairness-aware strategies based on the observation that, at each training round, the global fairness is determined by the fairness of individual updates. The first strategy computes optimal aggregation weights to balance the global model's unfairness across clients, while the second selects an optimal subset of updates such that the resulting aggregated model exhibits reduced unfairness.

Contrary to these works, our approach adopts a proactive perspective. Instead of reactively down-weighting, under-sampling, or constraining client updates during training, we optimize the federation \emph{before} training begins by selecting a subset of clients whose joint data maximizes predictive utility while minimizing unfairness risks. To the best of our knowledge, such proactive, optimization-based federation search remains largely unexplored. Related ideas have been studied primarily in the centralized setting, where they are often framed under the notion of \emph{Data Valuation} \cite{ghorbani2019datashapley, jia2023efficient, kwon2022betashapley}. These approaches aim to identify optimal subsets of training samples for supervised learning under various constraints, such as efficiency \cite{jia2023efficient} and fairness \cite{kwon2022betashapley}. From this perspective, our work can be viewed as a privacy-preserving federated extension of data valuation, where the objective is to identify an optimal subset of participants rather than individual data samples. A few works exist around the notion of Federated Data valuation. In this context, Li \etal{} proposed FedBary \cite{Li_2024_CVPR}. They frame the selection problem in terms of closeness to an ideal target distribution using Wasserstein distance through an interpolating measure protocol. In contrast to our work, the privacy of FedBary is heuristic, and only claimed through the locality of statistics without formal guarantees. Other prior works, in the context of federated data valuation, either require a server-held validation data \cite{ghorbani2019datashapley, WANG2023213, just2023lava}, a pre-training phase \cite{ghorbani2019datashapley, NEURIPS2021_8682cc30, WANG2023213}, and importantly, to the best of our knowledge, no existing work provides formal privacy analysis, or considers group fairness as a secondary objective in a proactive selection framework.

\section{Proactive optimal federation search}
In this section, we present the core methodology for a privacy-preserving search of an optimal federation that jointly optimizes utility and fairness. 

\subsection{Approach overview}

As illustrated in Figure \ref{fig:approach_overview} and formalized in Algorithm \ref{alg:fl_protocol}, our approach consists of three main phases:

\begin{enumerate}
    \item \textbf{Local counts transmission.} Each client computes contingency tables over pairs of features from its local dataset. To ensure privacy, clients perturb these statistics by adding multi-dimensional, zero-centered Gaussian noise sampled independently for each table before transmission.

    \item \textbf{Optimal federation search.} Upon receiving the noisy contingency tables from candidate clients, the server estimates a federation loss, a metric that quantifies the predictive utility and group fairness potential for any subset of clients. This metric defines the objective of a combinatorial optimization problem, which the server solves using simulated annealing to identify a near-optimal federation as presented in Algorithm \ref{alg:main_approach}.

    \item \textbf{Federated training with the selected federation.} Finally, a model is trained over the selected subset of clients using the \FedAvg{} algorithm.
\end{enumerate}

\subsection{Threat model}
Our framework involves a central server and a set of candidate clients $\mathcal{W}$. The server follows an
\emph{honest-but-curious} (semi-honest) threat model ~\cite{bonawitz2016practical, mcmahan2023communication,
kairouz2021advances}. Under this model, the server honestly executes the protocol: aggregating contingency tables, running
the simulated annealing search, and orchestrating FL with the optimal federation, but attempts to infer information
about clients' data from any information it legitimately receives. Clients are assumed to be honest: they correctly compute and release their local contingency tables.

\begin{algorithm}[h]
\caption{FL with Proactive Selection}
\label{alg:fl_protocol}

\KwIn{
    Candidate client set $\setW$, 
    Initial model parameter $\theta^{(\text{init})}$, 
    Target federation size $k$,
    Privacy budget $\epsilon_1$,
    Training rounds $T$
}
\KwOut{
    Final global model $\theta^T$
}

\BlankLine
\textbf{Phase 1: Differentially Private Analytics} \\
\For{each client $i \in \setW$ in parallel}{
    Compute local contingency tables $\mathcal{C}_i$ from $D_i$ \\
    Add Gaussian noise $\mathcal{N}(0, \sigma^2)$ to $\mathcal{C}_i$ satisfying $(\epsilon, \delta)$-DP (Sec. \ref{ssec:dp_release}) \\
    Send noisy tables $\tilde{\mathcal{C}}_i$ to server 
}

\BlankLine
\textbf{Phase 2: Proactive Federation Search} \\
Server finds optimal federation using $ \{\tilde{\mathcal{C}}_i\}_{i \in \setW} $ 
$W^\star \leftarrow \text{Algorithm~\ref{alg:main_approach}}(\setW, k, \{\tilde{\mathcal{C}}_i\}_{i \in \setW})$ \tcp*{Run SA}

\BlankLine
\textbf{Phase 3: Federated Training with $W^\star$} \\

\,\, $\theta^T \leftarrow \mathrm{FedAvg.FL}(W^\star, T)$

\Return $\theta^T$
\end{algorithm}

\begin{algorithm}[h]
\caption{Optimal Federation Search}
\label{alg:main_approach}

\KwIn{
Client set $\setW$, weights $\alpha, \beta, \gamma, \lambda$, federation size $k$, noisy contingency tables $\{\tilde{\components}_i\}_{i \in \setW}$, initial and final temperatures $\tau_0$, $\tau_{\text{min}}$, cooling rate $\eta$
}
\KwOut{
Selected federation $W^\star \in \Omega_k$
}

\BlankLine
\textbf{Initialization:} \\
Randomly sample an initial federation $W^{(\text{init})} \in \Omega_k$ \\
Set $W^\star \leftarrow W^{(\text{init})}$, temperature $\tau \leftarrow \tau_0$ \\
Compute $\PFL(W^{(\text{init})})$ using aggregated MI values 

\BlankLine
\While{$\tau > \tau_{\min}$}{
    Uniformly sample a neighboring federation $W'$ by swapping a client in $W^{(t)}$
    with one in $\Omega_k \setminus W^{(t)}$ 

    \BlankLine
    \textbf{MI Aggregation:} \\
    For all required variable pairs $(X,Y)$, compute aggregated counts
    $N_{W'}(x,y) = \sum_{i \in W'} N_i(x,y)$ and estimate $\MI_{W'}(X,Y)$ 
    and $\PFL(W')$ via Eq.~\eqref{eq:pfl_def} 

    \BlankLine
    \textbf{Acceptance Rule:} \\
    Let $\Delta = \PFL(W') - \PFL(W^{(t)})$ 
    \eIf{$\Delta \le 0$}{
        Accept the move: $W^{(t+1)} \leftarrow W'$ 
    }{
        Accept the move with probability $\exp(-\Delta / \tau)$ 
    }

    \BlankLine
    \If{$\PFL(W^{(t+1)}) < \PFL(W^\star)$}{
        $W^\star \leftarrow W^{(t+1)}$
    }

    Update temperature $\tau \leftarrow \eta \cdot \tau$
}

\Return $W^\star$
\end{algorithm}

\noindent The following formally defines the \PFL{} objective and describes how it can be computed in a federated manner.

\subsection{Potential Federation Loss (PFL)}
\label{ssec:pfl}

\subsubsection{Global MI via local contingency tables} 
\label{ssec:dist_mi_contingency}

A core component of our framework involves the ability to evaluate the statistical dependencies within a union data $D_W = \bigcup_{i \in W} D_i$ corresponding to a potential federation $W$. We quantify these dependencies using MI and compute the set of cross-feature MI values $\MI_W(X, Y)$ for all relevant pairs of features $(X, Y)$ based on the empirical distribution $\Prob_{D_W}$ of $D_W$. All considered features are assumed to be discrete, either naturally or through binning.

The MI of two discrete random variables $X$ and $Y$, with domains $\mathrm{dom}(X)$ and $\mathrm{dom}(Y)$ respectively is:
\begin{equation}
    \label{eq:mi_definition}
    \MI(X, Y) = \sum_{x \in \mathrm{dom}(X)} \sum_{y \in \mathrm{dom}(Y)} \Prob(x, y) \log_2 \frac{\Prob(x, y)}{\Prob(x) \Prob(y)}
\end{equation}
where $\Prob(x, y)$, $\Prob(x)$, and $\Prob(y)$ are the joint and marginal probability mass functions.

To compute $\MI_W(X, Y)$ using the empirical distribution $\Prob_{D_W}$ from the union dataset $D_W = \bigcup_{i\in W} D_i$, we need the joint frequency counts $N_W(x, y)$. That is, the number of samples in $D_W$ where the feature $X$ has value $x$, and feature $Y$ has value $y$. Computing the MI of $(X, Y)$ is as follows:

\begin{enumerate}[leftmargin=*] 
    \item \textbf{Local Computation:} Each client $i \in \setW$ computes its local contingency tables from its dataset $D_i$. This yields a table of dimension $ \mathrm{dom}(X) \times\mathrm{dom}(Y)$ containing the count $N_i(x, y)$ at the cell $(x, y)$. Let $\components_i$ represent the collection of these local tables communicated by client $i$.

    \item \textbf{Server-Side Aggregation:} When evaluating a specific candidate subset $W$, the server sums the corresponding local contingency table counts received from all clients $i \in W$:
        \begin{equation*}
            N_W(x, y) = \sum_{i \in W} N_i(x, y)
            \label{eq:count_aggregation} 
        \end{equation*}
        This provides the joint counts for the pair $(X,Y)$ over the union dataset $D_W$.

    \item \textbf{Joint and Marginals estimation:} From the aggregated joint counts $N_W(x, y)$ for a given pair $(X,Y)$, the server computes the following:
        \begin{itemize}[leftmargin=*]
            \item Total number of samples in $D_W$: $$n_W = \sum_{x_i \in \mathrm{dom}(X)} \sum_{y_i \in \mathrm{dom}(Y)} N_W(x_i, y_j)$$
            
            \item Joint: $\Prob_{D_W}(x, y) = N_W(x, y) / n_W$
            
            \item Marginals: $$\Prob_{D_W}(x) = \frac{\sum_{y \in \mathrm{dom}(Y)} N_W(x, y)}{n_W}$$ and similarly for $\Prob_{D_W}(y)$.
        \end{itemize}
    
    \item These empirically estimated probabilities are then substituted in \Cref{eq:mi_definition} to obtain $\MI_W(X, Y)$.
\end{enumerate}

\noindent By performing Step 3 for all required pairs of variables $(X, Y)$, the server constructs the complete set of empirical MI measures of $D_W$. It provides a global statistical view from the local statistics of clients in $W$.

\subsubsection{Metric description}
We define a loss function $\PFL: \Omega_k \rightarrow \mathbb{R}$, where $\Omega_k= \{W \subset \mathcal{P}(\setW) : |W|=k\}$, that quantifies the (un)desirability of a federation $W$ of size $k$, based on the MI measures of the associated dataset $D_W$. A low \PFL{} indicates a more desirable federation according to EDA principles \cite{guyon2003introduction, peng2005feature, hardt2016equality}. We categorize the scoring criteria into two primary aspects: utility and group fairness.

\begin{enumerate}[leftmargin=*]
    \item \textbf{Utility:} A high-utility federation provides strong predictive signals without wasting computational resources on repetitive or highly information-redundant data.
    \begin{itemize}[leftmargin=*] 
        \item \emph{High Predictive Signal:} The non-sensitive features should be strongly informative of the target variable ($\MI_W(N_k, T) \gg 0$).
        \item \emph{Low Redundancy:} The non-sensitive features should exhibit low mutual information with each other ($\MI_W(N_k, N_j) \approx 0$). Penalizing redundancy ensures that the selected federation provides diverse and complementary information, avoiding federations where most features carry overlapping information in the union data.
    \end{itemize}

    \item \textbf{Fairness:} The federation should be constructed to minimize risks of discrimination, both direct and indirect.
    \begin{itemize}[leftmargin=*]
        \item \emph{Direct bias:} The sensitive attribute $S$ should share minimal information with the target variable ($\MI_W(S, T) \approx 0$), reducing the risk of the model learning direct discriminatory patterns.
        \item \emph{Indirect bias:} The mutual information between the sensitive attribute and non-sensitive attributes should be minimized ($\MI(S, N_k) \approx 0$). This is crucial for reducing \emph{proxy bias}, where the model discriminates via correlated features (\eg{}, zip code acting as a proxy for race).
    \end{itemize}
\end{enumerate}

Accordingly, we frame the objective function to balance these two aspects:

\begin{align}
    \PFL(W) = & \underbrace{\alpha \cdot \MI_W(S, T)}_{\text{Direct bias}} 
    + \underbrace{\beta \sum_{N_k \in \mathcal{N}} \MI_W(S, N_k)}_{\text{Indirect bias}} \nonumber \\
    & + \underbrace{\gamma \sum_{N_k \neq N_j \in \mathcal{N}} \MI_W(N_k, N_j)}_{\text{Utility: Redundancy}} 
     - \underbrace{\lambda \sum_{N_k \in \mathcal{N}} \MI_W(N_k, T)}_{\text{Utility: Learning signal}}
\label{eq:pfl_def}
\end{align}

\noindent where $\alpha, \beta, \gamma,$ and $\lambda$ are non-negative weights that control the trade-offs between fairness-related objectives, redundancy penalization, and predictive utility.

\subsubsection{Finding optimal weights}
We express the search for the optimal weights of the \PFL{} objective as a meta-optimization problem. Specifically, we aim to maximize the average Spearman rank correlation ($\rho$) between the \PFL{} scores and the measures of utility and group fairness, computed across a sample of federations. The process is defined as follows:

\begin{enumerate}[leftmargin=*]
    \item We randomly sample a set $\mathcal{S} = \{W_1, \dots, W_r\}$ of $r$ federations of varying sizes from different ACS datasets \cite{ding2022retiringadultnewdatasets} and compute the combined MI measures required by the \PFL{} expression for each federation.
    \item We perform \FedAvg{} training with each federation $W$ yielding final models $\{\theta_{W}\}_{W \in \mathcal{S}}$, and evaluate their utility and group fairness on a held-out validation set.
    \item We use Differential Evolution \cite{storn1997differential} to solve the following optimization problem: 
    \begin{align*}
        \underset{\alpha, \beta, \gamma, \lambda}{\arg\max} \quad &\frac{1}{f} \sum_{i=1}^{N_f} \rho\left( \big\{ \PFL{}(W) \big\}_{W \in \mathcal{S}}, \, \big\{ \mathrm{F}_i^{\text{fair}}(\theta_W) \big\}_{W \in \mathcal{S}}\right) \\
        - &\frac{1}{u} \sum_{j=1}^{N_u} \rho\left( \big\{ \PFL{}(W) \big\}_{W \in \mathcal{S}}, \, \big\{ \mathrm{F}_j^{\text{utility}}(\theta_W) \big\}_{W \in \mathcal{S}}\right)
    \end{align*}
    where $f$ and $u$ are the number of fairness and utility metrics considered ($ \mathrm{F}_i^{\text{fair}}$ and $\mathrm{F}_i^{\text{utility}} $), respectively.
\end{enumerate}

\noindent The goal is to identify the weights that optimally align the \PFL{} score with the desired model outcomes. We seek a strong positive correlation with fairness disparities, meaning a low \PFL{} correctly reflects low group fairness disparities. Conversely, we seek a strong negative correlation with utility metrics, ensuring a low \PFL{} aligns with high utility scores. We use weighted accuracy and F1 score for utility, and Equal Opportunity and Model Accuracy Differences (EOD/MAD) for group fairness. This process with $r = 50$ federations yielded $\alpha=2.0$, $\beta=0.89$, $\gamma=0.11$, $\lambda=1.33$. The correlations are illustrated in Figure \ref{fig:pfl_correlation_with_metrics}.

\begin{figure}[h!]
    \centering
    \includegraphics[width=0.9\linewidth]{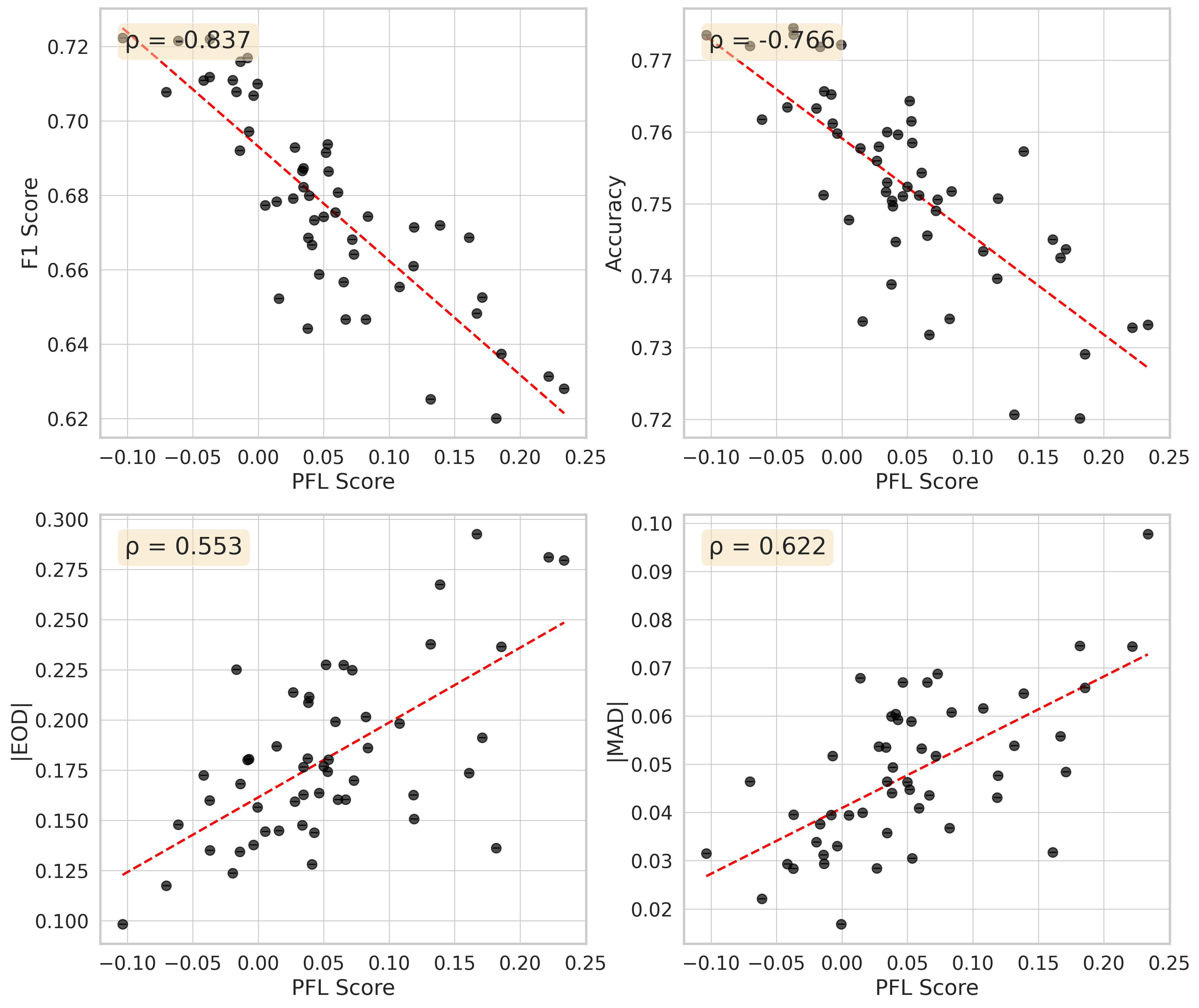}
    \caption{Correlations between the $\PFL{}$ and utility/fairness metrics using weights found through the aforementioned process.}
    \label{fig:pfl_correlation_with_metrics}
\end{figure}

As shown in Figure \ref{fig:pfl_correlation_with_metrics}, the \PFL{} exhibits strong correlations with utility and fairness metrics. Specifically, we observe strong negative correlations for accuracy and the F1 score, yielding Spearman coefficients of $-0.84$ and $-0.77$, respectively. Conversely, the \PFL{} shows strong positive correlations with the fairness disparity measures. This validates that the \PFL{} with optimized weights serves as a highly reliable proxy for a federation's overall quality while remaining efficiently computable from local statistics. However, we note that these values act as domain-specific proxies rather than universal constants. Because MI inherently scales with feature entropy, and the fairness-utility trade-off varies across different empirical data, applying our framework to highly divergent data modalities (\eg{}, high-dimensional medical diagnostics or financial transaction networks) may require a lightweight re-execution of this meta-optimization process to recalibrate the \PFL{} to the new domain's specific statistical properties.

\subsection{Convergence guarantees and interpretation}
\label{ssec:convergence}

Algorithm \ref{alg:main_approach} uses simulated annealing to navigate the combinatorial landscape of client selection. The search space $\Omega_k$ grows combinatorially with $|\setW|$, making exhaustive search infeasible. Furthermore, the objective function $\PFL$ is non-convex and multimodal. Indeed, a client swap might improve the loss locally, but a better combination might exist further in the search space. Under these conditions, greedy heuristics are prone to becoming trapped in poor local minima \cite{battiti1994using, aarts1989simulated}. In this setting, simulated annealing provides the following convergence properties:

\begin{itemize}[leftmargin=*]
    \item \textbf{Asymptotic global convergence:} Modeled as a time-inhomogeneous Markov chain, the algorithm is theoretically guaranteed to converge to the set of global minimizers of $\PFL(W)$ with probability approaching $1$ as $t \to \infty$, provided that the cooling schedule decreases sufficiently slowly. Specifically, this can be achieved using a logarithmic schedule $\tau_t \geq c/\log(t+d)$ \cite{geman1984stochastic, hajek1988cooling}.
    
    \item \textbf{Escaping local minima:} The Metropolis acceptance criterion allows Algorithm \ref{alg:main_approach} to accept worse moves (where $\Delta > 0$) with probability $e^{-\Delta/\tau}$. This stochastic behavior enables the search to escape local minima, particularly during the high-temperature initial phase, which is crucial for avoiding suboptimal federations.
    
    \item \textbf{Ergodicity:} The neighbor generation mechanism defined in Algorithm \ref{alg:main_approach} (swapping a single client) ensures that the underlying Markov chain is irreducible. Any valid federation state $W$ is reachable from any initial state $W^{(\text{init})}$ through a finite sequence of swaps, satisfying the connectivity requirement for convergence.
\end{itemize}

In practice, we employ a geometric cooling schedule ($\tau_{t+1} = \eta \cdot \tau_t$) to ensure termination within a reasonable computational budget. While this relaxes the strict theoretical guarantee, it is a standard approximation in combinatorial optimization that effectively balances \textit{exploration} and \textit{exploitation} with the computational cost\cite{kirkpatrick1983optimization, henderson2003theory}.

\section{Differentially-private contingency tables}
\label{ssec:dp_release}
In this section, we analyse the privacy of our approach with respect to the considered threat model. We begin by providing an overview of the privacy budget throughout different phases, before estimating the required noise.

\subsection{Privacy guarantee structure}

Under the aforementioned threat model, the server's view across the full protocol consists of three components:

\begin{tcolorbox}[
    title=Server view,
    colback=white,
    colframe=black,
    boxrule=0.8pt,
    arc=2pt
]
\begin{enumerate}[leftmargin=*]
    \item \textbf{Phase~1 (Selection Analytics):} Differentially private contingency tables
    $\{\widetilde{\mathcal{C}}_i\}_{i \in \mathcal{W}}$ of all clients.

    \item \textbf{Phase~2 (Federation Search):} The
    \textsf{PFL} values computed during simulated annealing are
    deterministic processes of the noisy tables already held by the server.
    This induces no additional privacy cost due to immunity to post-processing \cite{dwork2006differential}.

    \item \textbf{Phase~3 (FL Training):} The updates
    $\{\theta_i^{(t)}\}_{i \in W^\star}$ received from the clients in $W^\star$ across $T$ rounds.
\end{enumerate}
\end{tcolorbox}

The identity of the selected federation $W^\star$ is a deterministic
function of the first phase's noisy statistics, and therefore does not constitute an
independent release of private information beyond what is already accounted for
in the first phase.

\paragraph{Decomposed privacy budget via sequential composition}

The total privacy guarantee for a client $i \in \mathcal{W}$ decomposes by
sequential composition~\cite{dwork2006differential, Mironov_2017} of the
two phases in which client $i$'s data is accessed, namely Phase-1 and Phase-3.

\textbf{Phase~1} consumes a budget $\epsilon_1$ for the release
of differentially private contingency tables. This is a one-time, pre-training cost incurred
by all candidate clients, regardless of whether they are subsequently
selected.

\textbf{Phase~3} consumes an additional budget of $\epsilon_2$ for clients in $W^\star$ that participate in FL training,
achieved through standard DP-SGD~\cite{Abadi_2016} or secure
aggregation~\cite{bonawitz2016practical}. Clients not selected in
$W^\star$ incur no additional privacy cost after Phase~1.

By sequential composition, the total privacy cost is:
\begin{equation}
    \epsilon_{\mathrm{total}} =
    \begin{cases}
        \epsilon_1 & \text{if } i \notin W^\star \\
        \epsilon_1 + \epsilon_2 & \text{if } i \in
        W^\star
    \end{cases}
    \label{eq:total-budget}
\end{equation}

This decomposition yields a structural privacy advantage over reactive FL
approaches, where all clients are involved throughout every training
round. Non-selected clients are fully protected after Phase~1 with no further
exposure.

\subsection{Privacy budget and noise calibration}
To enable the distributed reconstruction of the MI matrix for any potential federation $W \in \Omega_k$, candidate clients release contingency tables $\mathbf{c}_{X, Y}$ for all relevant attribute pairs $(X, Y)$. To protect local data, we enforce $(\epsilon, \delta)$-differential privacy on each client's release. We use the Gaussian mechanism~\cite{dwork2006differential} together with Rényi Differential Privacy (RDP)~\cite{Mironov_2017} to efficiently allocate the privacy budget over the high-dimensional queries involved in the federation search.

\subsubsection{Sensitivity analysis}
The fundamental unit of our release is the contingency table (or 2D histogram) for a pair of categorical variables. Consider a client $i$ with a local dataset $D_i$ and a pair of variables $(X, Y)$. The contingency table query can be flattened into a vector of counts $\mathbf{c}_{X,Y}: D \rightarrow \mathbb{N}^{|\mathrm{dom}(X)| \times |\mathrm{dom}(Y)|}$, where each entry corresponds to the frequency of a specific value pair $(x, y)$. Since each sample contributes to exactly one cell in the contingency table (corresponding to their specific attribute values), adding or removing a single record changes exactly one coordinate in the vector by $\pm 1$. Consequently, the $L_2$-sensitivity of a single contingency table query is:
\begin{equation*}
    \Delta_2(\mathbf{c}_{X,Y}) = \max_{D, D'} \| \mathbf{c}_{X,Y}(D) - \mathbf{c}_{X,Y}(D') \|_2 = 1
\end{equation*}
Importantly, this holds regardless of the sizes of $\mathrm{dom}(X)$ and $\mathrm{dom}(Y)$.

\subsubsection{Privacy budget estimation via RDP}
To construct the full MI matrix, a client must release tables for all pairwise combinations of $K$ features and the target variable $T$. The total number of queries $M$ corresponds to the number of unique pairs in the set of variables $\{X_1, \dots, X_K, T\}$:
\begin{equation*}
    M = \binom{K+1}{2} = \frac{(K+1)K}{2}
\end{equation*}
We apply the Gaussian Mechanism, adding noise $\mathcal{N}(0, \sigma^2)$ to each cell of every table. Using RDP, which offers tighter composition bounds than standard $(\epsilon, \delta)$ accounting, we analyze the cumulative privacy cost.

The RDP of the Gaussian mechanism with sensitivity $\Delta_2=1$ and noise scale $\sigma$ at order $\alpha$ is given by $\rho(\alpha) = \frac{\alpha}{2\sigma^2}$. Since the noise is added independently to each of the $M$ queries, the total RDP budget accumulates linearly:
\begin{equation}
    \rho_{\text{total}}(\alpha) = \sum_{j=1}^{M} \rho_j(\alpha) = M \cdot \frac{\alpha}{2\sigma^2}
\end{equation}
To satisfy a target $(\epsilon, \delta)$-DP guarantee, we convert the RDP budget back to standard DP parameters using the standard conversion:
\begin{equation}
    \epsilon(\delta) = \min_{\alpha > 1} \left( \frac{M \alpha}{2\sigma^2} + \frac{\ln(1/\delta)}{\alpha - 1} \right)
\end{equation}
This expression is convex in $\alpha$ of the form $f(\alpha) = A\alpha + \frac{B}{\alpha - 1}$, with $A=\frac{M}{2\sigma^2}$ and $B = \ln(1/\delta)$.
By solving $\frac{df}{d\alpha} = A - \frac{B}{(\alpha - 1)^2} = 0$, we obtain the optimal order $\alpha^* = 1 + \sqrt{\frac{B}{A}}$.
Substituting $\alpha^*$ back into the objective yields the analytical relationship~\cite{Mironov_2017}:
\begin{equation}
    \sigma = \frac{\sqrt{2M \ln(1/\delta)} + \sqrt{2M (\ln(1/\delta) + \epsilon)}}{2\epsilon}
    \label{eq:closed_form_sigma}
\end{equation}
This formulation enables us to calibrate the required noise scale $\sigma$ for a fixed $\epsilon$ and number of features, ensuring that the cumulative leakage from releasing the full correlation structure remains within the chosen privacy budget.

\subsubsection{Noise scale calibration}
\label{sssec:noise_calibration}

For our optimal federation search, we impose a strict privacy budget of $\epsilon_1 = 1.0$ for the entire client's release, with $\delta = 10^{-5}$. 

To achieve this target, we must determine the minimum noise scale $\sigma$ required for the Gaussian mechanism. As established in Equation \eqref{eq:closed_form_sigma}, the relationship between $\epsilon_1$, the number of features $K$, and $\sigma$ is non-linear and depends on the minimization over the RDP order $\alpha$.

\textbf{Example Estimation ($K=10$ ACSIncome \cite{ding2022retiringadultnewdatasets}):}
Consider a scenario where the dataset contains $K=10$ features and a label.
\begin{enumerate}
    \item \textbf{Query Count:} The client must release tables for all unique pairs. That is,
     $M = \binom{10+1}{2} = 55$ total contingency tables.
    \item \textbf{Optimization:} We numerically solve for $\sigma$ such that:
    \[ \min_{\alpha > 1} \left( \frac{55 \alpha}{2\sigma^2} + \frac{\ln(10^5)}{\alpha - 1} \right) \le 1.0 \]
    \item \textbf{Result:} Using Equation \eqref{eq:closed_form_sigma} yields a required noise scale of approximately $\sigma \approx 36.5$. This means every cell count in the released tables will have Gaussian noise added with a standard deviation of $36.5$.
\end{enumerate}

\textbf{Impact of Feature Dimensionality:}
As the number of features $K$ increases, the number of required pairwise queries $M$ grows quadratically ($M \propto K^2$). Consequently, to maintain the same strong aggregate privacy level ($\epsilon_1=1.0$), the noise added to each table must increase. Table \ref{tab:sigma_scaling} presents the required noise scale $\sigma$ for varying numbers of features under the fixed constraints $\epsilon_1 = 1.0, \delta = 10^{-5}$.

\begin{table}[h]
\centering
\caption{Required Gaussian Noise Scale ($\sigma$) vs. Number of Features ($K$) for fixed total budget $\epsilon_1 = 1.0, \delta = 10^{-5}$.}
\label{tab:sigma_scaling}
\renewcommand{\arraystretch}{1.2}
\begin{tabular}{ccc}
\hline
\textbf{Features ($K$)} & \textbf{Total Pairs ($M$)} & \textbf{Required $\sigma$}  \\ \hline
5  & 15   & 18.2   \\
10 & 55   & 36.5   \\
20 & 210  & 71.8   \\
30 & 465  & 107.4  \\
50 & 1275 & 179.2  \\ \hline
\end{tabular}
\end{table}

Table \ref{tab:sigma_scaling} shows that RDP enables us to keep the noise growth linear in the number of features as expected from Equation \eqref{eq:closed_form_sigma}.

\subsection{Justification of $\epsilon_1 = 1.0$}

To validate our privacy guarantees empirically, we conduct a membership inference 
audit based on a likelihood ratio test applied to the released noisy contingency 
tables following a similar methodology as Homer \etal{}~\cite{homer2008resolving} attack on aggregate statistics.
The attacker exploits the residual signal left in the noisy marginal counts to 
distinguish members from non-members, using both single-table and joint multi-table 
variants of the attack.
Our main results (Figure~\ref{fig:dp_audit}) show that the attack remains indistinguishable from random guessing for $\epsilon_1 \leq 1.0$, and only becomes non-trivial beyond $\epsilon_1 = 1.0$, confirming that $\epsilon_1 = 1.0$ provides a sufficient privacy guarantee in practice.
Full details of the attack construction and per-task results are provided in 
Appendix~\ref{sec:appendix_dp_audit}.

\section{Noise propagation from counts to PFL}
\label{sec:noise_propagation}

We aim to understand the impact of DP noise on the optimization landscape of Algorithm \ref{alg:main_approach}. We begin by analysing the noise propagation from counts to the \PFL{} objective, before providing our main stability results in Section~\ref{sec:noisy_sa}.

\subsubsection{Noise on aggregated counts}

Let $\mathbf{c}_W \in \mathbb{R}^d$ denote the vector of true joint counts for a
federation $W$, obtained by summing the local contingency tables of its
clients:
$$
\mathbf{c}_W = \sum_{i \in W} \mathbf{c}_i
$$
Each client releases a DP-perturbed table
$$
\tilde{\mathbf{c}}_i = \mathbf{c}_i + \mathbf{z}_i, \qquad \mathbf{z}_i \in \mathbb{R}^d  \text{  and  } \mathbf{z} \sim \mathcal{N}(0, \sigma^2 I)
$$

\noindent with independent Gaussian noise. The aggregated noisy counts therefore satisfy
\begin{equation*}
    \tilde{\mathbf{c}}_W = \sum_{i \in W}  \tilde{ \mathbf{c} }_i
    = \mathbf{c}_W + \underbrace{\sum_{i \in W} \mathbf{z}_i}_{\mathbf{z}_W}
\end{equation*}

By the stability of the Gaussian distribution under summation: $\mathbf{z}_W \sim \mathcal{N}(0, k\sigma^2 I)$.
Hence, the DP mechanism induces an additive centered multivariate Gaussian
perturbation whose variance grows linearly with the federation size $k$.

\subsection{Propagation to mutual information via the Delta method}

Let $f(\mathbf{c}) = \MI(X,Y)$ denote the mapping from a (flattened) joint-count vector $\mathbf{c}$ to the
corresponding mutual information. We aim to characterize the perturbation $\xi_{\MI} = f(\tilde{\mathbf{c}}) - f(\mathbf{c})$.

Assuming a high signal-to-noise regime (aggregated counts large relative to
$\sigma\sqrt{k}$), we consider the unbiased Gaussian perturbation $\tilde{\mathbf{c}}_W$ as a neighborhood of $\mathbf{c}_W$. Thus, a first-order multivariate Taylor expansion around $\mathbf{c}_W$ provides
\begin{equation*}
    f(\tilde{\mathbf{c}}_W) \approx f(\mathbf{c}_W) + \nabla f(\mathbf{c}_W)^\top \mathbf{z}_W
\end{equation*}

Because $\mathbf{z}_W$ is Gaussian and the gradient is deterministic for fixed data, the MI perturbation is itself a scalar Gaussian random variable obtained as a
linear transformation of $\mathbf{z}_W$.

\paragraph{The sufficiency of first-order approximation.}

Assume that $f$ is continuously differentiable with a locally Lipschitz 
gradient, so that there exists $L>0$ such that
$$
\|\nabla f(\mathbf{c}_W + \mathbf{z}_W) - \nabla f(\mathbf{c}_W)\| \le L \|\mathbf{z}_W\|
$$
in a neighborhood of $\mathbf{c}_W$. A standard Taylor bound provides

$$
|R_2(\mathbf{z}_W)| = |f(\mathbf{c}_W + \mathbf{z}_W) - f(\mathbf{c}_W) - \nabla f(\mathbf{c}_W)^\top \mathbf{z}_W| \le \frac{L}{2}\|\mathbf{z}_W\|^2
$$

To justify a first-order approximation, the second-order term  must be 
negligible compared to the linear variation $|\nabla f(\mathbf{c}_W)^\top \mathbf{z}_W|$. 
Since $\mathbf{z}_W$ is Gaussian, the scalar projection $\nabla f(\mathbf{c}_W)^\top \mathbf{z}_W$ 
is itself a centered normal random variable with variance 
$\|\nabla f(\mathbf{c}_W)\|^2 k \sigma^2$. 
Hence, with high probability:

$$
|\nabla f(\mathbf{c}_W)^\top \mathbf{z}_W| = \Theta\bigl(\|\nabla f(\mathbf{c}_W)\|\,\|\mathbf{z}_W\|\bigr)
$$
providing a probabilistic lower bound on the magnitude of the linear term.

Combining this with the Taylor remainder bound yields that, with high probability,
$$
\frac{|R_2(\mathbf{z}_W)|}{|\nabla f(\mathbf{c}_W)^\top \mathbf{z}_W|} = O\left(\frac{L\|\mathbf{z}_W\|}{\|\nabla f(\mathbf{c}_W)\|}\right)
$$
Hence the first-order approximation is valid whenever
$$
\|\mathbf{z}_W\| \ll \frac{\|\nabla f(\mathbf{c}_W)\|}{L},
$$
which defines a neighborhood of local linearity. 

In entropy-type functions satisfy 
$\|\nabla f(\mathbf{c}_W)\| = O(1/n_W)$ and $L = O(1/n_W^2)$ \cite{Fienberg2010differential, awan2024structure}, 
so the linearity radius scales as $O(n_W)$. 
Since differential-privacy noise obeys $\|\mathbf{z}_W\| = O(\sigma\sqrt{k})$, the first-order approximation therefore, holds whenever
$
\sigma \sqrt{k} \ll n_W
$
corresponding to a high signal-to-noise ratio between the true counts and 
the DP perturbation.

Under this condition, mutual information computed on the differentially private aggregated counts is an unbiased estimation of the true mutual information, justifying the Gaussian noise modelling used in the subsequent stability analysis.

\paragraph{Estimating propagated noise variance}
Recall that $\MI(X,Y) = H(X) + H(Y) - H(X,Y)$.
For probabilities $p_k = n_k/n$, entropy can be written explicitly as a
function of the counts:
$$
H = -\sum_k p_k \log p_k
  = -\sum_k \frac{n_k}{n} \log \frac{n_k}{n}.
$$

Differentiating with respect to a specific count $n_k$ requires considering both:
(1) the dependence of $p_k$ on $n_k$, and  
(2) the dependence through the normalization
$n = \sum_j n_j$. Using the quotient rule:
$$
\frac{\partial p_k}{\partial n_k}
= \frac{1}{n} - \frac{n_k}{n^2}
= \frac{1}{n}(1 - p_k)
$$
and for $j \neq k$:
$$
\frac{\partial p_j}{\partial n_k}
= -\frac{n_j}{n^2}
= -\frac{p_j}{n}
$$

Applying the chain rule to
$H = -\sum_j p_j \log p_j$ gives
$$
\frac{\partial H}{\partial n_k}
= -\sum_j
\frac{\partial p_j}{\partial n_k}
(\log p_j + 1)
$$

Substituting the derivatives above and simplifying yields:
$$
\frac{\partial H}{\partial n_k}
= -\frac{1}{n}(\log p_k + H)
$$

Applying this relation to the joint and marginal entropies and accounting for
the dependence of the total count $n = \sum_{x,y} n_{xy}$ on each cell, the
partial derivative of MI with respect to a joint count $n_{xy}$ is
\begin{equation}
\frac{\partial \MI}{\partial n_{xy}}=
-\frac{1}{n}\bigl(\mathrm{PMI}(x,y) + \MI(X,Y)\bigr)
\label{eq:mi_gradient_correct}
\end{equation}
where $\mathrm{PMI}(x,y) = \log \frac{p_{xy}}{p_x p_y}$ denotes the pointwise mutual information (a single event).

\subsection{Resulting variance of the MI perturbation}

Using Eq.~\eqref{eq:mi_gradient_correct} with the covariance of
$\mathbf{z}_W$, the first-order variance of the Gaussian perturbation affecting the
mutual information estimate of a count vector $\mathbf{c}_W$ with total samples $n_W$ is:
\begin{align*}
    \sigma_{\MI}^2
    &= \nabla f( \mathbf{c}_W )^\top (k\sigma^2 I)\, \nabla f(\mathbf{c}_W)
    \\ &= \frac{k\sigma^2}{n_W^2} \sum_{x \in \mathrm{dom}(X)} \sum_{y \in \mathrm{dom}(Y)}
      \left(\mathrm{PMI}(x,y) + \MI\right)^2
\end{align*}

Under finite-alphabet and interior-probability assumptions, the summation
term is $O(1)$, yielding the asymptotic scaling
$$
\sigma_{\MI}^2 = O\!\left(\frac{k\sigma^2}{n_W^2}\right)
$$

Since the \PFL{} is defined as a weighted sum of
multiple noisy mutual-information estimates (\cf{} Equation \ref{eq:pfl_def}), the induced perturbation at the
\PFL{} level also follows a centered Gaussian approximation, and satisfies the same asymptotic scaling
$\sigma_{\PFL}^2 = O\!\left(\frac{k\sigma^2}{n_W^2}\right)$.
Consequently, the noise affecting the optimization objective decreases
quadratically with the total sample size, providing a robust estimation
of the \PFL{} even under high differential-privacy noise. This
result justifies modeling the observed optimization landscape of the simulated annealing instance as an unbiased Gaussian perturbation of the true objective in the following stability analysis.

\subsection{Validation via Monte Carlo simulation}
To empirically validate the Gaussian first-order noise model, we conduct
$1000$ simulations of the following:
\begin{enumerate}[leftmargin=*]
    \item Generate synthetic contingency tables representing aggregated
    federation counts $\mathbf{c}_W$.
    
    \item Compute the MI from the resulting clean counts.
    
    \item Add centered Gaussian noise $\mathbf{z}_W$ with varying standard deviations
    to obtain perturbed counts $\tilde{\mathbf{c}}_W = \mathbf{c}_W + \mathbf{z}_W$.
    
    \item Recompute the MI from the noisy counts
    $\tilde{\mathbf{c}}_W$.
    
    \item Analyze the distribution of the MI perturbation
    $f(\tilde{\mathbf{c}}_W) - f(\mathbf{c}_W)$ as a function of the signal-to-noise ratio (SNR),
    defined as the mean count magnitude divided by the noise standard
    deviation.
\end{enumerate}

\noindent Figure~\ref{fig:noise_at_mi_montecarlo} empirically confirms the previous
analysis.  
In the high SNR regime, where the Gaussian perturbation is small relative to
the total counts, the noisy mutual information provides an accurate and
essentially unbiased approximation of the true MI. This behavior is consistent
with the first-order Delta-method prediction.  
Conversely, in the low–SNR regime, the noisy MI exhibits a noticeable bias and
a substantially increased variance around its expectation, reflecting the
breakdown of the local linear approximation.

\begin{figure}[h]
    \centering
    \includegraphics[width=0.7\linewidth]{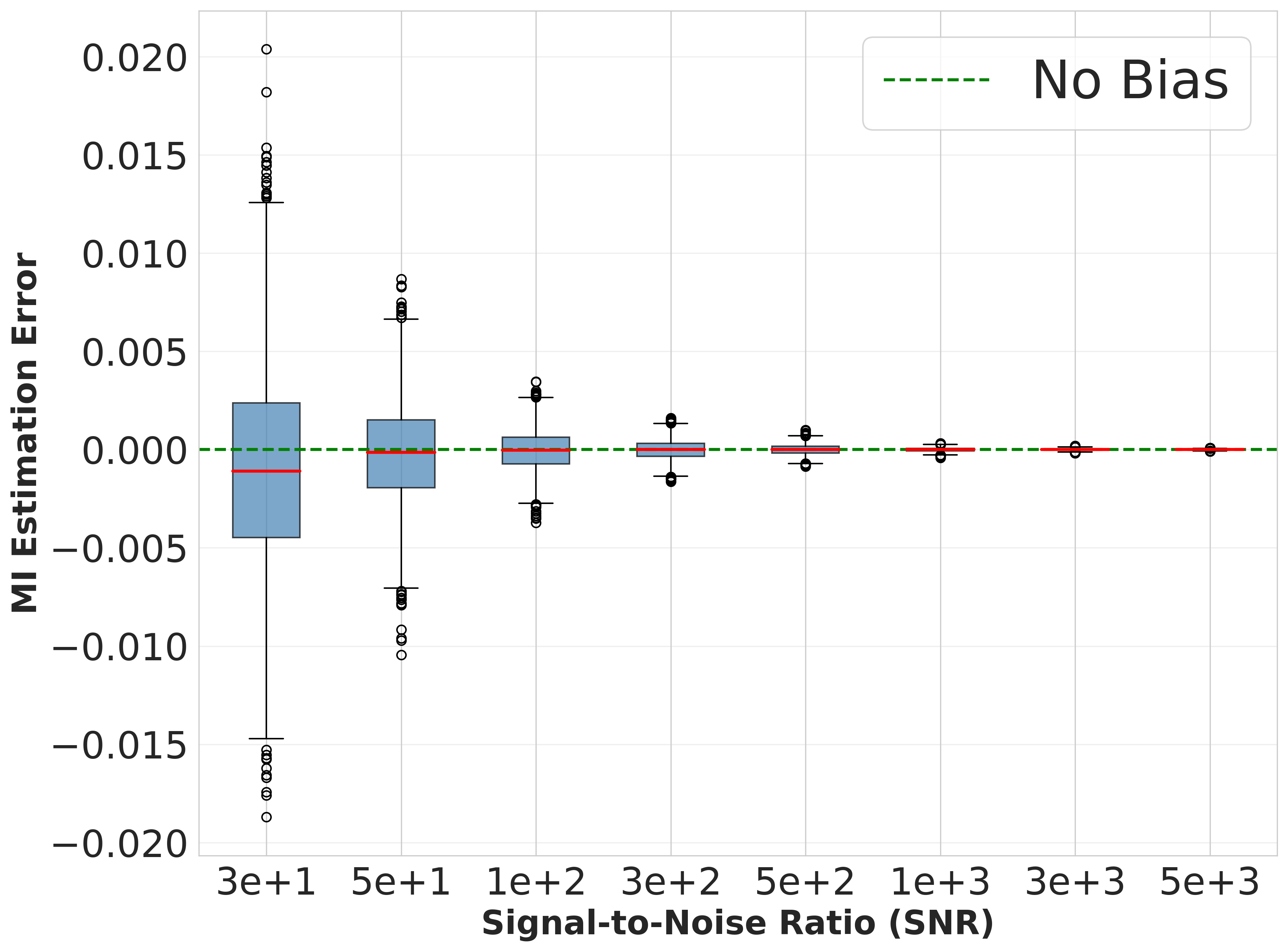}
    \caption{Distribution of MI estimation error under Gaussian perturbations of contingency counts, shown across varying SNR ratios over 1000 simulations per SNR-level.}
    \label{fig:noise_at_mi_montecarlo}
\end{figure}

\section{Simulated annealing over a noisy objective}
\label{sec:noisy_sa}

In our privacy-preserving setting, the server optimizes a noisy estimate of the
\PFL{} objective. As shown in the previous section, using a first-order Taylor expansion in high-SNR regime, the noise at the MI level decomposes into a sum of independent centered Gaussians.
\begin{align*}
    f(\tilde{\mathbf{c}}_W) &\approx f(\mathbf{c}_W) + \nabla f(\mathbf{c}_W)^\top \sum_{i \in W}\mathbf{z}_i 
    \\ &=  f(\mathbf{c}_W) + \sum_{i \in W} \nabla f(\mathbf{c}_W)^\top \mathbf{z}_i
    %\xi_W \;\approx\; \frac{1}{k} \sum_{i \in W} \zeta_i
\end{align*}
The \PFL{} being a weighted sum of multiple noisy MI estimates then,
\begin{equation*}
    Y(W) = \PFL(W) + \xi_W \text{\, with \,} \xi_W = \sum_{i \in W} \xi_i
\end{equation*}
and $\xi_W \sim \mathcal{N}(0,\sigma_{\PFL{}}^2)$ denoting the
effective Gaussian perturbation which decomposes into independent noise source from each client. We will now investigate simulated annealing dynamics on the noisy objective $Y$ with respect to the exact $\PFL{}$. 

\subsection{Variance Reduction via Neighbor Correlation}
Simulated Annealing decisions depend on the noisy objective difference between a
current federation $W$ and a neighboring federation $W'$:
\begin{equation*}
    d = Y(W') - Y(W).
\end{equation*}

Under the additive decomposition above, the covariance structure is
\begin{equation*}
    \mathrm{Cov}(\xi_W,\xi_{W'}) =
    \frac{|W \cap W'|}{k}\,\sigma_{\PFL{}}^2
\end{equation*}
For a single swap move, $|W \cap W'| = k-1$, which yields
\begin{equation*}
    \mathrm{Cov}(\xi_W,\xi_{W'}) =
    \frac{k-1}{k}\,\sigma_{\PFL{}}^2
\end{equation*}

Therefore, although each noisy objective has variance $\sigma_{\PFL{}}^2$, the
variance of the decision variable $d$ is significantly reduced:
\begin{align}
    \mathrm{Var}(d)
    &= \mathrm{Var}(\xi_{W'}) + \mathrm{Var}(\xi_W)
       - 2\,\mathrm{Cov}(\xi_W,\xi_{W'}) \nonumber \\
    &= 2\sigma_{\PFL{}}^2
       - 2\left(\frac{k-1}{k}\right)\sigma_{\PFL{}}^2
       = \frac{2}{k}\sigma_{\PFL{}}^2
    \label{eq:reduced_variance}
\end{align}

Hence, the noise affecting simulated annealing decisions decreases as
$O(1/k)$, implying increased robustness for larger federations.

\subsection{Exponential Stability of Local Decisions}

We define a \emph{misordering event} as the situation where a strictly worse neighbor ($\Delta = \PFL(W') - \PFL(W) > 0$) appears better strictly due to noise, that is, $d \leq 0$ while $\Delta \geq 0$.

\begin{theorem}[Local Stability]
The probability that Simulated Annealing
accepts a strictly worse neighboring federation due solely to noise decays
exponentially with the federation size $k$. For any true objective gap
$\Delta > 0$,
\begin{equation*}
    \Pr(d \le 0 \mid \Delta)
    \;\le\;
    \exp\!\left(
        - \frac{k \Delta^2}{4\sigma_{\PFL{}}^2}
    \right)
\end{equation*}
\end{theorem}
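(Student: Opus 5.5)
The plan is to reduce the statement to a Gaussian tail bound on the decision variable $d$, using the additive noise model $Y(W)=\PFL(W)+\xi_W$ and the variance computation in Eq.~\eqref{eq:reduced_variance}. First I would write
\begin{equation*}
    d = Y(W')-Y(W) = \Delta + (\xi_{W'}-\xi_W),
\end{equation*}
where $\Delta=\PFL(W')-\PFL(W)>0$ is deterministic given the data. Under the first-order Gaussian model of Section~\ref{sec:noise_propagation}, $(\xi_W,\xi_{W'})$ is jointly Gaussian and centered. Hence $\xi_{W'}-\xi_W$ is a centered scalar Gaussian, and its variance is $\frac{2}{k}\sigma_{\PFL{}}^2$ by Eq.~\eqref{eq:reduced_variance}. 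Concretely, for a single swap the $k-1$ shared client contributions cancel, leaving only the contribution of the swapped-out client and that of the swapped-in client. Therefore $d\sim\mathcal{N}\bigl(\Delta,\tfrac{2}{k}\sigma_{\PFL{}}^2\bigr)$.

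Second, the misordering event $\{d\le 0\}$ is exactly $\{\xi_{W'}-\xi_W\le -\Delta\}$. Standardizing gives
\begin{equation*}
    \Pr(d\le 0\mid \Delta)=\Phi\!\left(-\frac{\Delta\sqrt{k}}{\sqrt{2}\,\sigma_{\PFL{}}}\right),
\end{equation*}
where $\Phi$ is the standard normal CDF. I would then apply the Chernoff bound for a centered Gaussian $G$ with variance $v$, namely $\Pr(G\le -t)\le \exp(-t^2/(2v))$ for $t>0$. This follows from Markov's inequality applied to $e^{-sG}$ and optimizing over $s$, i.e.\ taking $s=t/v$. With $t=\Delta$ and $v=2\sigma_{\PFL{}}^2/k$ this gives exactly $\exp\bigl(-k\Delta^2/(4\sigma_{\PFL{}}^2)\bigr)$, which is the claimed bound. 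If desired, the Mills-ratio bound $\Phi(-x)\le \tfrac12 e^{-x^2/2}$ sharpens this by a factor $1/2$, but that is not needed.

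The main obstacle is not the tail bound but justifying the noise model feeding into it. Two points need care. First, the covariance formula $\mathrm{Cov}(\xi_W,\xi_{W'})=\frac{k-1}{k}\sigma_{\PFL{}}^2$ assumes each client contributes an equal, independent share $\sigma_{\PFL{}}^2/k$. It also implicitly evaluates the linearization of $W$ and $W'$ at essentially the same gradient. Strictly, $\nabla f(\mathbf{c}_W)\neq\nabla f(\mathbf{c}_{W'})$, so I would state the result within the high-SNR first-order regime of Section~\ref{sec:noise_propagation}, where swapping one client perturbs the gradient only at order $O(1/n_W^2)$, and treat that discrepancy as negligible. Second, the phrase ``decays exponentially in $k$'' should be read with $\sigma_{\PFL{}}^2$ held fixed. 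Since $\sigma_{\PFL{}}^2=O(k\sigma^2/n_W^2)$ and $n_W$ itself grows with $k$, I would add a remark that with per-client sample sizes of order $\bar n$ the exponent scales like $k\Delta^2\bar n^2/\sigma^2$. The decay in $k$ therefore persists, and the inequality itself is exact under the stated Gaussian model.
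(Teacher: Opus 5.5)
Your argument matches the paper's proof: use Eq.~\eqref{eq:reduced_variance} to get $d\sim\mathcal{N}(\Delta,\tfrac{2}{k}\sigma_{\PFL{}}^2)$, standardize, and apply the Gaussian tail bound $\Pr(Z\le -t)\le e^{-t^2/2}$, which gives exactly $\exp(-k\Delta^2/(4\sigma_{\PFL{}}^2))$. One small slip in your closing remark: with $n_W\approx k\bar n$ you get $\sigma_{\PFL{}}^2=O(\sigma^2/(k\bar n^2))$, so the exponent scales like $k^2\Delta^2\bar n^2/\sigma^2$ rather than $k\Delta^2\bar n^2/\sigma^2$; moreover, for comparable client sizes a single swap typically changes \PFL{} by only $\Delta=O(1/k)$, so whether the decay in $k$ survives depends on how $\Delta$ scales with $k$, not only on $\sigma_{\PFL{}}$.
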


\begin{proof}
From~\eqref{eq:reduced_variance}, we have $d \sim \mathcal{N}\!\left(
        \Delta,\; \frac{2}{k}\sigma_{\PFL{}}^2
    \right)$. Thus,
\begin{equation*}
    \Pr(d \le 0 \mid \Delta)
    = \Pr\!\left(
        Z \le -\frac{\Delta}{\sqrt{2\sigma_{\PFL{}}^2/k}}
    \right)
\end{equation*}
where $Z \sim \mathcal{N}(0,1)$. Applying the standard sub-Gaussian tail bound
$\Pr(Z \le -t) \le e^{-t^2/2}$ yields
\begin{equation*}
    \Pr(d \le 0 \mid \Delta)
    \le
    \exp\!\left(
        - \frac{\Delta^2}{2 \cdot (2\sigma_{\PFL{}}^2/k)}
    \right)
    =
    \exp\!\left(
        - \frac{k \Delta^2}{4\sigma_{\PFL{}}^2}
    \right)
\end{equation*}
\end{proof}

This shows that local comparisons under the noisy objective remain highly
stable. Indeed, the probability of misordering neighboring federations decreases
exponentially both in the federation size $k$ and in the squared true objective gap $\Delta^2$.

\subsection{Global $\mu$-optimality via Gaussian supremum concentration}
\label{ssec:global_optimality}

While the previous result establishes local stability, we further seek a global guarantee. Specifically, we aim to show that even if the global minimizer of the noisy landscape $Y$ (denoted $W_Y^*$) deviates from the true minimizer, it remains a high-utility approximation that captures the essential features of the underlying objective.

We define the set of $\mu$-suboptimal solutions as $\mathcal{N}_\mu = \{W : \PFL(W) - \PFL(W^\star) \le \mu\}$. A sufficient condition for the noisy minimizer $W_Y^\star$ to be in $\mathcal{N}_{\mu}$ depends on the maximum magnitude of the noise fluctuations over the entire search space $\Omega_k$.

\begin{lemma}[Supremum Bound]
    Let $\|\xi\|_\infty = \sup_{W \in \Omega_k} |\xi_W|$ be the supremum of the noise process. If $2\|\xi\|_\infty \le \mu$, then the noisy minimizer is $\mu$-suboptimal: $W_Y^\star \in \mathcal{N}_\mu$.
\end{lemma}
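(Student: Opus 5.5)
The plan is a deterministic sandwich argument on the noisy objective $Y(W) = \PFL(W) + \xi_W$, with no probability needed. Randomness is handled later, when a Gaussian supremum bound is used to ensure $2\|\xi\|_\infty \le \mu$ with high probability. Let $W^\star$ denote a true minimizer of $\PFL$ over $\Omega_k$ and $W_Y^\star$ a minimizer of $Y$ over $\Omega_k$. Since $\Omega_k$ is finite, both exist and $\|\xi\|_\infty$ is a finite maximum.

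First, I would use the optimality of $W_Y^\star$ for the noisy landscape: $Y(W_Y^\star) \le Y(W^\star)$. Expanding gives
\begin{equation*}
\PFL(W_Y^\star) + \xi_{W_Y^\star} \le \PFL(W^\star) + \xi_{W^\star}.
\end{equation*}

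Rearranging and bounding each noise term by the supremum then gives
\begin{equation*}
\PFL(W_Y^\star) - \PFL(W^\star) \le \xi_{W^\star} - \xi_{W_Y^\star} \le |\xi_{W^\star}| + |\xi_{W_Y^\star}| \le 2\|\xi\|_\infty .
\end{equation*}
Under the hypothesis $2\|\xi\|_\infty \le \mu$, the right-hand side is at most $\mu$. This is exactly the defining condition of $\mathcal{N}_\mu$, so $W_Y^\star \in \mathcal{N}_\mu$.

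There is no real obstacle. The only care needed is bookkeeping:
\begin{itemize}
\item the argument holds for any choice of minimizer when ties occur;
\item it uses only $\sup|\xi_W|$, not the Gaussian structure or the first-order approximation of Section~\ref{sec:noise_propagation};
\item the bound compares $W_Y^\star$ with the true optimum value $\PFL(W^\star)$, which is what $\mathcal{N}_\mu$ requires.
\end{itemize}
The substantive difficulty, bounding $\mathbb{E}\|\xi\|_\infty$ and its tail over the combinatorially large $\Omega_k$ (for instance by a union bound giving $\sigma_{\PFL}\sqrt{2\log|\Omega_k|}$ plus Borell–TIS concentration), belongs to the subsequent result, not this lemma.
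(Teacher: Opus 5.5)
Your proposal is correct and follows the paper's proof essentially line for line: start from $Y(W_Y^\star) \le Y(W^\star)$, expand using $Y = \PFL + \xi$, and bound $\xi_{W^\star} - \xi_{W_Y^\star}$ by $|\xi_{W^\star}| + |\xi_{W_Y^\star}| \le 2\|\xi\|_\infty \le \mu$. Your remarks on the finiteness of $\Omega_k$ (so both minimizers exist) and on ties are points the paper leaves implicit, and they do not change the argument.
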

\begin{proof}
    By definition of the minimizers, $Y(W_Y^\star) \le Y(W^\star)$. Expanding this inequality using $Y(W) = \PFL(W) + \xi_W$:
    \begin{align*}
        \PFL(W_Y^\star) + \xi_{W_Y^\star} &\le \PFL(W^\star) + \xi_{W^\star} \nonumber \\
        \PFL(W_Y^\star) - \PFL(W^\star) &\le \xi_{W^\star} - \xi_{W_Y^\star}
    \end{align*}
    Since $\xi_{W^\star} - \xi_{W_Y^\star} \le |\xi_{W^\star}| + |\xi_{W_Y^\star}| \le 2 \|\xi\|_\infty$, we have the optimality gap $\PFL(W_Y^\star) - \PFL(W^\star) \le 2 \|\xi\|_\infty$. Thus, if $2\|\xi\|_\infty \le \mu$, the gap is bounded by $\mu$.
\end{proof}

Since $(\xi_W)_{W \in \Omega_k}$ is a centered Gaussian process with maximum variance $\sigma_{\PFL{}}^2$, we rely on the Borell-TIS inequality to bound the probability of large deviations.

\begin{theorem}{(Global $\mu$-Optimality):}
Let $\mathbb{E}[\|\xi\|_\infty]$ be the expected supremum of the noise process. For any $\mu > 2\mathbb{E}[\|\xi\|_\infty]$, the probability that the noisy minimizer is \emph{not} $\mu$-suboptimal satisfies:
\begin{equation*}
    \Pr(W_Y^\star \notin \mathcal{N}_\mu) \le 2 \exp\left( - \frac{(\mu/2 - \mathbb{E}[\|\xi\|_\infty])^2}{2\sigma_{\PFL{}}^2} \right)
\end{equation*}
\end{theorem}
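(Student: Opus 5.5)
The plan is to combine the Supremum Bound lemma with a Gaussian concentration inequality for the supremum of $|\xi|$ over the finite index set $\Omega_k$.

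\textbf{Step 1: reduce to a deviation of $\|\xi\|_\infty$.} By the contrapositive of the Supremum Bound lemma, the event $\{W_Y^\star \notin \mathcal{N}_\mu\}$ is contained in $\{2\|\xi\|_\infty > \mu\} = \{\|\xi\|_\infty > \mu/2\}$. It therefore suffices to bound
\[
\Pr\bigl(\|\xi\|_\infty - \mathbb{E}[\|\xi\|_\infty] > t\bigr), \qquad t = \mu/2 - \mathbb{E}[\|\xi\|_\infty],
\]
and the hypothesis $\mu > 2\mathbb{E}[\|\xi\|_\infty]$ guarantees $t > 0$.

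\textbf{Step 2: handle the absolute value.} I would write $\|\xi\|_\infty = \max\bigl(\sup_W \xi_W, \sup_W (-\xi_W)\bigr)$. Each of $(\xi_W)$ and $(-\xi_W)$ is a centered Gaussian process on the finite set $\Omega_k$, so both suprema are finite and integrable. Each has maximal variance $\sup_W \mathrm{Var}(\xi_W) \le \sigma_{\PFL{}}^2$, from the Gaussian model of Section~\ref{sec:noisy_sa}. Also, $\mathbb{E}[\sup_W \xi_W] \le \mathbb{E}[\|\xi\|_\infty]$, and the same holds for $-\xi$. A union bound then gives
\[
\Pr(\|\xi\|_\infty > \mu/2) \le \Pr\bigl(\sup_W \xi_W > \mu/2\bigr) + \Pr\bigl(\sup_W (-\xi_W) > \mu/2\bigr).
\]
For each term I would apply Borell--TIS, $\Pr(\sup \xi - \mathbb{E}\sup\xi > u) \le \exp(-u^2/(2\sigma_{\PFL{}}^2))$, with $u = \mu/2 - \mathbb{E}[\sup \xi] \ge t > 0$. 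This yields $e^{-t^2/(2\sigma_{\PFL{}}^2)}$ for each term and hence the factor $2$ in the statement.

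An alternative route is to apply Borell--TIS directly to $\|\xi\|_\infty$, viewed as the supremum of the process $\pm\xi_W$ indexed by $\Omega_k \times \{\pm1\}$. That gives the bound without the factor $2$, which is consistent with the stated (weaker) bound. I would present the union-bound version to match the constant.

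\textbf{Main obstacle.} The delicate point is justifying that $(\xi_W)$ is a genuine centered Gaussian process with variance at most $\sigma_{\PFL{}}^2$. The paper obtains this only through the first-order Delta-method approximation of Section~\ref{sec:noise_propagation}. The linear part $\sum_{i\in W} \nabla f(\mathbf{c}_W)^\top \mathbf{z}_i$ is jointly Gaussian across all $W$, because every $\xi_W$ is a linear functional of the same independent vector $(\mathbf{z}_i)_{i\in\setW}$.

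I would therefore state explicitly that the theorem is proved for this linearized noise model. In that model joint Gaussianity is exact, and $\sigma_{\PFL{}}^2$ is taken as a uniform upper bound on $\mathrm{Var}(\xi_W)$ over $\Omega_k$. The supremum is over a finite set, so no separability or measurability issues arise.
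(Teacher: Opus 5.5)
Your proposal is correct and follows the same route as the paper: the contrapositive of the Supremum Bound lemma reduces the failure event to $\{\|\xi\|_\infty > \mu/2\}$, and Borell--TIS applied to $\sup_W \xi_W$ and $\sup_W(-\xi_W)$, combined with a union bound, gives the factor $2$. You also note that $\mathbb{E}[\sup_W \xi_W] \le \mathbb{E}[\|\xi\|_\infty]$, so each one-sided deviation $u$ is at least $\mu/2 - \mathbb{E}[\|\xi\|_\infty]$; this makes your centering step more rigorous than the paper's, which applies the union bound while centering directly at $\mathbb{E}[\|\xi\|_\infty]$.
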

\begin{proof}
    From the Lemma, the failure event $\{W_Y^\star \notin \mathcal{N}_\mu\}$ implies that the noise magnitude exceeded the threshold: $\|\xi\|_\infty > \mu/2$.
    
    The Borell-TIS inequality states that for a centered Gaussian process $f$ on a compact set with maximum variance $\sigma^2_{\max}$, the supremum deviates from its mean according to:
    \begin{equation*}
        \Pr\left(\left[\|\xi\|_\infty - \mathbb{E}[\sup_{W \in \Omega_k} (\xi_W)\right] > u\right) \le \exp\left(-\frac{u^2}{2\sigma^2_{\max}}\right)
    \end{equation*}
    We apply this to our process. Since we are bounding the absolute value $\|\xi\|_\infty = \sup | \xi_W |$, we apply the union bound to $\sup \xi_W$ and $\sup (-\xi_W)$, introducing a factor of 2.
    
    Let $u = \mu/2 - \mathbb{E}[\|\xi\|_\infty]$. Since we assume $\mu > 2\mathbb{E}[\|\xi\|_\infty]$, we have $u > 0$. Substituting into the tail bound:
    \begin{align*}
        \Pr(\|\xi\|_\infty > \mu/2) &= \Pr(\|\xi\|_\infty - \mathbb{E}[\|\xi\|_\infty] > u) \nonumber \\
        &\le 2 \exp\left( - \frac{u^2}{2\sigma_{\PFL{}}^2} \right) \nonumber \\
        &= 2 \exp\left( - \frac{(\mu/2 - \mathbb{E}[\|\xi\|_\infty])^2}{2\sigma_{\PFL{}}^2} \right)
    \end{align*}
    This completes the proof.
\end{proof}

This result establishes a squared exponential decay of the probability of $W^\star_Y$ from falling outside of $\mathcal{N}_{\mu}$. The tolerance margin $\mu$ improves this decay, while the privacy-induced noise slows it through the terms $\mathbb{E}[\|\xi\|_\infty]$ and $\sigma^2_{\PFL{}}$. In practice, setting the half tolerance margin so that it clears the expected noise supremum by $6$ standard deviations ($\mu/2 - \mathbb{E}[\|\xi\|_\infty] = 6\sigma_{\PFL{}}$), provides a failure probability below $10^{-7}$.

\section{Experiments}
\label{sec:exp}
Our experimental analysis is designed to assess the proposed framework along several axes:

\begin{enumerate}[leftmargin=*]
    \item \textbf{Computational practicality.}
    We evaluate whether the optimal federation search described in Algorithm~\ref{alg:main_approach} can be executed efficiently by the server in realistic settings.

    \item \textbf{Impact on vanilla federated learning.}
    We evaluate whether federations selected by minimizing \PFL{} yield:  (1) faster convergence, (2) improved predictive accuracy, and (3) inherently better group fairness, when training is performed using the standard \FedAvg{} algorithm.

    \item \textbf{Comparison with state-of-the-art methods.}
    We compare \PFL{}-optimized federations trained with conventional \FedAvg{} against suboptimal random federations trained using methods such as \FedProx{} and \Scaffold{}.
    The goal is to determine whether proactive federation selection alone can match or exceed the performance–fairness trade-offs achieved by these reactive approaches. Table \ref{tab:baselines} lists these methods.
\end{enumerate}

\paragraph*{\textbf{Folktables benchmarks and federation construction}}
To evaluate our proactive federation selection framework in a realistic and reproducible setting, we rely on the publicly available \emph{Folktables} benchmark suite \cite{ding2022retiringadultnewdatasets}, which is derived from the American Community Survey (ACS). Each benchmark includes demographic attributes that enable the evaluation of group fairness metrics. We focus on the gender sensitive attribute, and assess fairness using disparity measures alongside predictive performance. This experimental setting further supports proactive federation selection by providing (i) realistic non-IID data distributions across clients, (ii) fairness-sensitive prediction tasks grounded in socio-economic outcomes, and (iii) a sufficiently large and diverse client pool to evaluate combinatorial federation optimization under differential privacy constraints.

\paragraph{Federated partitioning via U.S. states}
We simulate a natural federated learning environment by treating each U.S.\ state as an individual client.  
Each client, therefore, holds data drawn from a distinct geographical and socio-economic distribution, inducing the statistical heterogeneity that characterizes real-world FL deployments.  
Candidate federations correspond to subsets of states, and the optimal federation search is performed over this combinatorial space.

\paragraph{Prediction tasks}
Our experiments cover all major Folktables classification benchmarks, each corresponding to a different socio-economic prediction problem:

\begin{itemize}[leftmargin=*]
    \item \textbf{ACSIncome:} Predicts whether an individual’s annual income exceeds a fixed threshold.  
    This task is widely used to study economic inequality and algorithmic fairness in income prediction.

    \item \textbf{ACSEmployment:} Predicts employment status, capturing labor-market participation patterns across demographic groups and regions.

    \item \textbf{ACSPublicCoverage:} Predicts whether an individual is covered by public health insurance programs, providing a proxy for socio-economic vulnerability and healthcare access.

    \item \textbf{ACSTravelTime:} Predicts whether the commute time exceeds a specified duration, capturing transportation inequality and regional infrastructure differences.
\end{itemize}

\begin{table}[t]
%\small
\centering
\caption{Overview of our considered baselines.}
\label{tab:baselines}
\begin{tabular}{@{}lp{6.5cm}@{}}
\toprule
\textbf{Method} & \textbf{Principle} \\
\midrule
\multicolumn{2}{@{}l}{\textit{Aggregation and/or local regularization Strategies}} \\
\midrule
FedAvg \cite{mcmahan2017fedavg} & Weighted average of client models proportional to dataset sizes. \\
FedProx \cite{li2020federated} & Proximal regularization term to handle client heterogeneity. \\
SCAFFOLD \cite{karimireddy2021scaffold}& Control variates to correct client drift from non-IID data. \\
\midrule
\multicolumn{2}{@{}l}{\textit{Client Selection Strategies}} \\
\midrule
Random \cite{mcmahan2017fedavg} & Uniform random client sampling. \\
UCB-CS \cite{cho2020bandit} & Bandit-based UCB scores balancing exploration/exploitation. \\
Threshold \cite{ribero2020communication} & Selects high-loss clients; O-U process for stale estimates. \\
FedSampling \cite{qi2023fedsampling} & Selection probability proportional to client data size. \\
\bottomrule
\end{tabular}
\end{table}

\subsection{Optimal federation search cost}
For each federation size, $k \in \{5, 10, 15\}$, and each of the four ACS classification tasks, we perform $5$ independent runs following the methodology outlined in Algorithm \ref{alg:main_approach}. Our simulated annealing parameters are provided in Table \ref{tab:sa_hyperparams}.

\begin{table}[h]
\centering
\caption{Simulated Annealing parameters.}
\label{tab:sa_hyperparams}
\renewcommand{\arraystretch}{1.2}
\begin{tabular}{lc}
\hline
\textbf{Parameter} & \textbf{Value} \\ \hline
Initial temperature ($\tau_0$) & 1.0 \\
Cooling rate ($\eta$) & 0.98 \\
Minimum temperature ($\tau_{\min}$) & $10^{-4}$ \\
Maximum iterations & 5000 \\
Iterations per temperature & 15 \\ \hline
\end{tabular}
\end{table}

Our simulated annealing configuration is designed to balance exploration and convergence. An initial temperature of $\tau_0=1.0$ aligns with the PFL scale, enabling broad early exploration by accepting many worse moves. A slow geometric cooling rate ($\alpha = 0.98$) creates about 340 temperature levels, allowing a gradual transition from exploration to exploitation, with 15 neighbor evaluations per level for efficient local search ($\approx 5,100$ total evaluations). The process stops when the temperature drops below $10^{-4}$ or the maximum iterations are reached, ensuring convergence to a robust optimum.
Table~\ref{tab:sa_results} reports the empirical runtime across different federation sizes and ACS tasks. All experiments were conducted on an Intel Core i7-12700H processor (12th generation, 14 cores) and $22$GB of RAM, running Ubuntu 22.04 LTS. Notably, our simulated annealing implementation is sequential. Advanced parallel implementations induce significant speedups as reported by Greening \etal{} \cite{GREENING1990293}. For page limitation purposes, we report in the main body the results for $k=5$, and provide the remaining figures in Appendix \ref{sec:appendix_additional_exp}.

\begin{table*}[t]
\centering
\caption{Optimal federations found by simulated annealing for each ACS task and federation size $k$ (best over 5 runs). Noise scale $\sigma$ is computed via RDP composition for $\epsilon=1.0$, $\delta=10^{-5}$. Search space sizes are: $\binom{50}{5} \approx 2.1 \times 10^6$, $\binom{50}{10} \approx 1.0 \times 10^{10}$, and $\binom{50}{15} \approx 2.3 \times 10^{12}$.}
\label{tab:sa_results}
\footnotesize
\begin{tabular}{ll|cccc}
\toprule
\textbf{Task} & $k$ & \textbf{PFL} & $\sigma$ & \textbf{Runtime (min)} & \textbf{Optimal federations (states)} \\
\midrule
\multirow{3}{*}{\shortstack[l]{ACSIncome\\(10 feature)}} 
  & 5  & $-0.167 \pm 0.01$ & \multirow{3}{*}{39.81} & 23 & AK, CA, CT, MD, RI \\
  & 10 & $-0.192 \pm 0.01$ &  & 43 & AZ, CA, DE, MA, MD, NH, NJ, NV, OR, VA \\
  & 15 & $-0.187 \pm 0.02$ &  & 69 & AK, AZ, CA, DE, MD, NH, NJ, NM, NV, OR, RI, SD, VA, VT, WA \\
\midrule
\multirow{3}{*}{\shortstack[l]{ACSEmployment\\(16 feature)}} 
  & 5  & $-0.715 \pm 0.03$ & \multirow{3}{*}{60.62} & 27 & CO, MN, ND, NE, UT \\
  & 10 & $-0.578\pm 0.02$ &  & 45 & CO, LA, MN, NE, NM, OH, RI, SD, WI, WY \\
  & 15 & $-0.565\pm 0.02$ &  & 67 & AK, CO, CT, IA, IL, LA, MA, MN, MS, MT, ND, RI, VT, WI, WY \\
\midrule
\multirow{3}{*}{\shortstack[l]{ACSPublicCov.\\(19 feature)}} 
  & 5  & $-0.809 \pm 0.04$ & \multirow{3}{*}{71.02} & 31  & AR, KY, MO, MS, SC \\
  & 10 & $-0.434 \pm 0.03$ &  &  48 & AK, AL, AR, HI, KY, LA, ME, MO, MT, SC \\
  & 15 & $-0.0214\pm 0.03$ &  & 75    & AK, AR, CO, KY, MI, MO, MS, MT, NC, NM, OH, PA, SC, WI, WV \\
\midrule

\multirow{3}{*}{\shortstack[l]{ACSTravelTime\\(16 feature)}} 
  & 5  & $-0.282\pm 0.02$ & \multirow{3}{*}{60.62} & 31 & CA, FL, HI, NY, SD \\
  & 10 & $-0.295\pm 0.02$ &  & 46 & AK, CA, CO, FL, IL, MA, MS, ND, NM, NY \\
  & 15 & $-0.230\pm 0.03$ &  & 72 & CA, CT, FL, GA, IL, MA, MD, MT, NV, NY, VA, VT, WA, WV, WY \\
\bottomrule
\end{tabular}
\end{table*}

\textbf{Findings.} The empirical runtimes reported in Table \ref{tab:sa_results} confirm that the proposed simulated annealing configuration remains computationally tractable as the federation size increases. While the runtime grows with $k$, the increase is sub-quadratic and remains within practical limits, ranging from approximately $23$ minutes for $k=5$ to 75 minutes for $k=15$. The low \PFL{}'s standard deviations indicate stable execution outcomes across the 5 runs, suggesting that the annealing schedule and fixed evaluation budget provide consistent optimization behavior despite DP noise. Overall, these results demonstrate that the optimization procedure scales reasonably with federation size while preserving manageable runtime. Finally, as observed in Section \ref{sec:noisy_sa}, the differential-privacy noise needed to maintain a consistent $\epsilon=1.0$ level grows linearly with the number of features. This is due to the larger number of queries, which induces larger RDP composition results. Nevertheless, the \PFL{}'s deviations remained stable across different benchmarks, regardless of the number of features. This is due to the simulated annealing stability mechanisms described in Section \ref{sec:noisy_sa}.

\begin{figure*}[t]
    \centering
    \begin{subfigure}{0.40\linewidth}
        \centering
        \includegraphics[width=\linewidth]{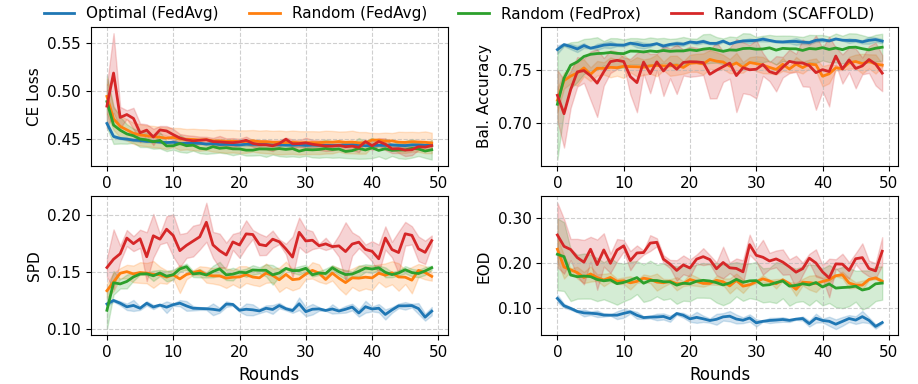}
        \caption{ACSIncome}
        \label{fig:exp1_income}
    \end{subfigure}
    \hspace{0.5cm}
    \begin{subfigure}{0.40\linewidth}
        \centering
        \includegraphics[width=\linewidth]{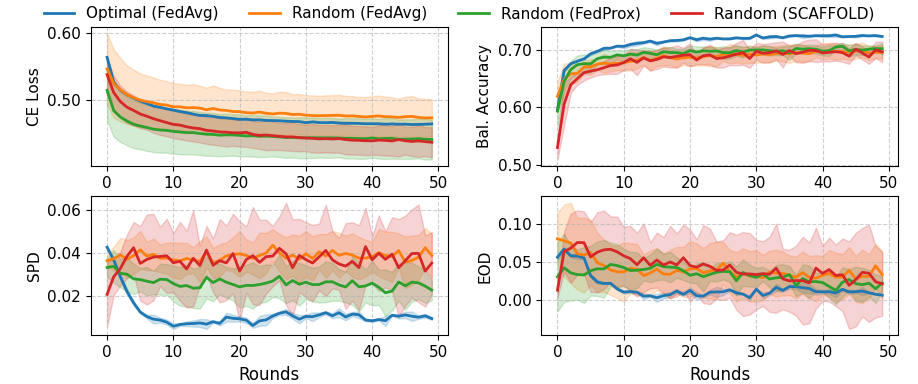}
        \caption{PublicCoverage}
        \label{fig:exp1_public_coverage}
    \end{subfigure}

    \vspace{0.5em}

    \begin{subfigure}{0.40\linewidth}
        \centering
        \includegraphics[width=\linewidth]{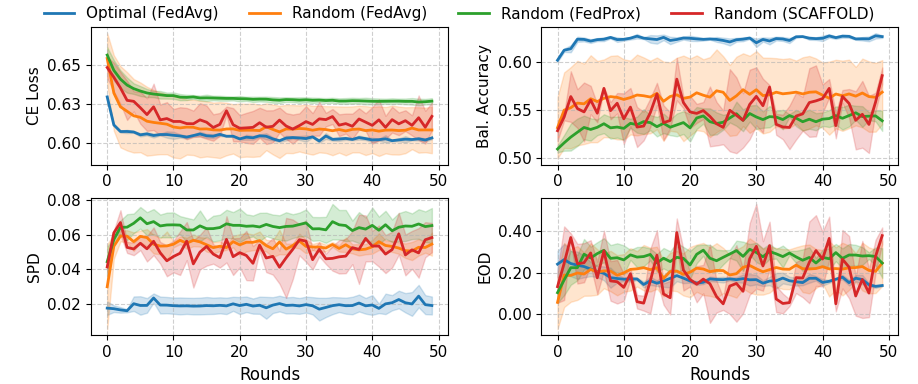}
        \caption{TravelTime}
        \label{fig:exp1_travel_time}
    \end{subfigure}
    \hspace{0.5cm}
    \begin{subfigure}{0.40\linewidth}
        \centering
        \includegraphics[width=\linewidth]{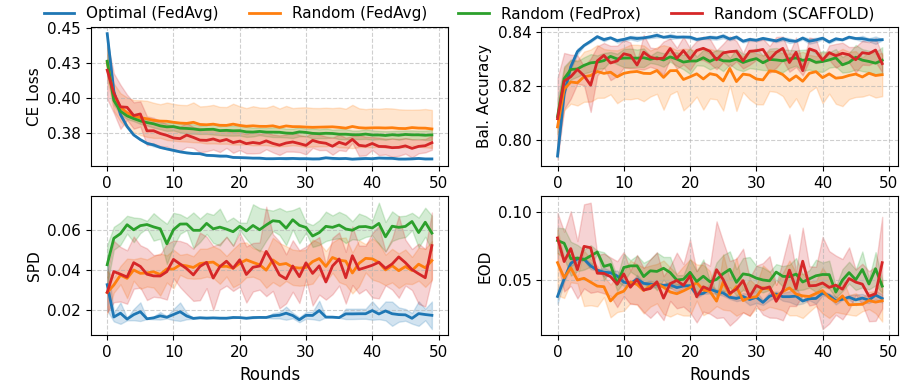}
        \caption{ACSEmployment}
        \label{fig:exp1_employment}
    \end{subfigure}

    \caption{Convergence and fairness of our optimal federations compared with random ones, with FedAvg, FedProx, and SCAFFOLD, for ACSIncome, ACSEmployment, ACSPublicCoverage, and ACSTravelTime.}
    \label{fig:exp_1_k_5}
\end{figure*}

\begin{figure*}[!htb]
    \centering
    \begin{subfigure}{0.40\linewidth}
        \centering
        \includegraphics[width=\linewidth]{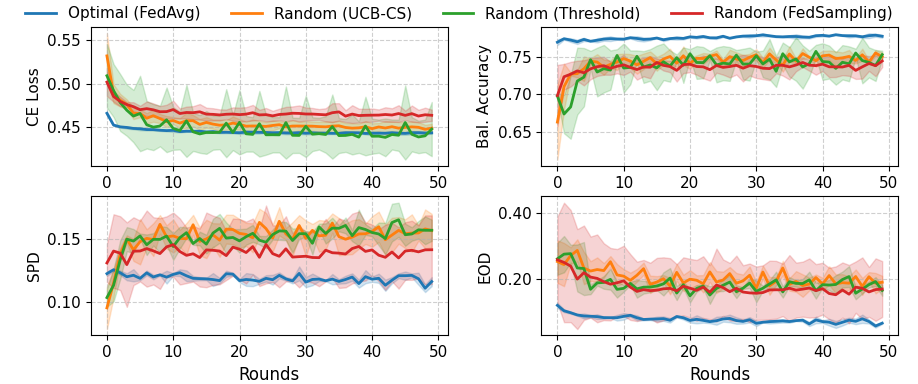}
        \caption{ACSIncome}
        \label{fig:exp2_income}
    \end{subfigure}
    \hspace{0.5cm}
    \begin{subfigure}{0.40\linewidth}
        \centering
        \includegraphics[width=\linewidth]{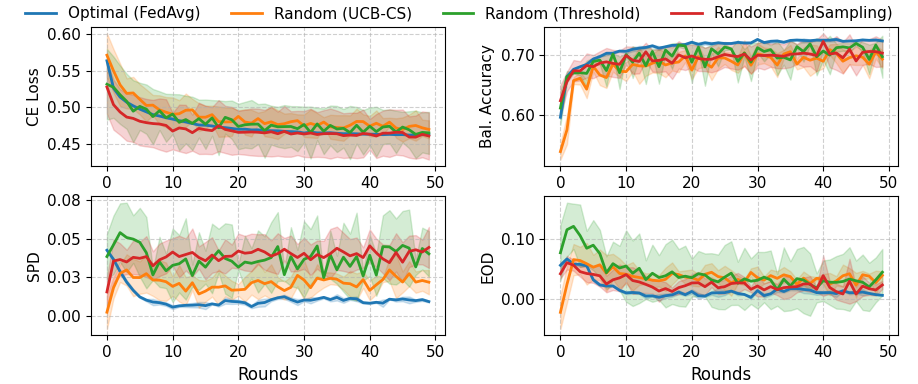}
        \caption{PublicCoverage}
        \label{fig:exp2_public_coverage}
    \end{subfigure}

    \begin{subfigure}{0.40\linewidth}
        \centering
        \includegraphics[width=\linewidth]{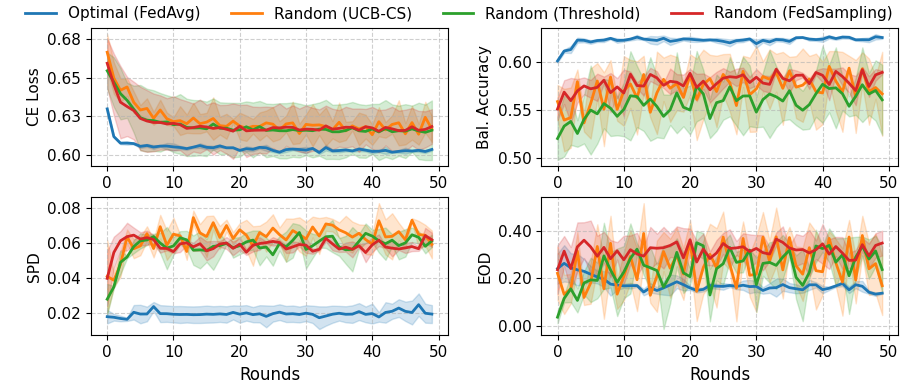}
        \caption{TravelTime}
        \label{fig:exp2_travel_time}
    \end{subfigure}
    \hspace{0.5cm}
    \begin{subfigure}{0.40\linewidth}
        \centering
        \includegraphics[width=\linewidth]{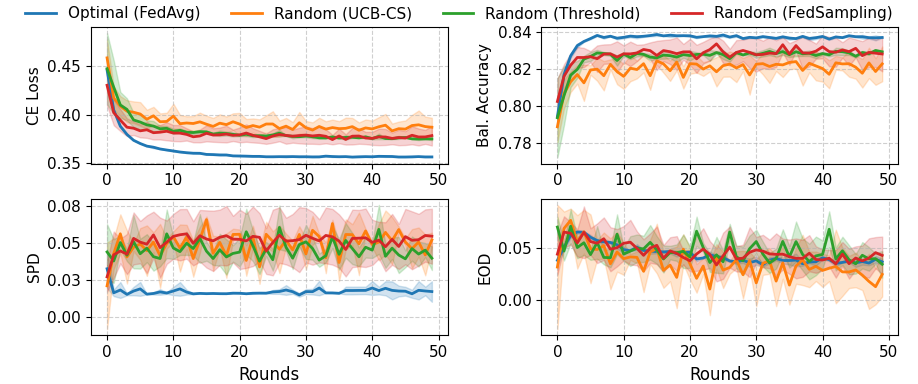}
        \caption{ACSEmployment}
        \label{fig:exp2_employment}
    \end{subfigure}

    \caption{Convergence and fairness of our optimal federations compared with random ones, with sampling strategies: UCB-CS \cite{cho2020bandit}, FedSampling \cite{qi2023fedsampling}, and Threshold-based participation \cite{ribero2020communication}, for ACSIncome, ACSEmployment, ACSPublicCoverage, and ACSTravelTime with $k=5$.}
    \label{fig:exp_2_k_5}
\end{figure*}

\subsection{Comparison with adaptive aggregations}

We compare optimally found federations (outlined in Tables \ref{tab:sa_results}) trained with standard \FedAvg{} with randomly sampled federations trained using SCAFFOLD ~\cite{karimireddy2021scaffold} and FedProx~\cite{li2020federated}. These methods are explicitly designed to mitigate client drift and improve convergence under strong local data heterogeneity. Our goal is to evaluate whether structural optimization of federations outperforms uniformly sampled federations trained with advanced aggregation techniques tailored for non-IID settings. Results are depicted in Figures \ref{fig:exp_1_k_5} and \ref{fig:exp_2_k_5}. 

\textbf{Findings.}
From a predictive performance perspective, methods such as FedProx and SCAFFOLD improve convergence stability under uniform random client sampling compared to standard \FedAvg{}. However, in our experiments, these approaches do not outperform our optimally constructed federations trained with vanilla \FedAvg{}.

This observation is consistent with the underlying mechanisms of these algorithms. Both \FedProx{} and \Scaffold{} mitigate client drift through corrective mechanisms, respectively, a proximal regularization and variance-reduction control variates, that penalize large deviations from the global objective. While these mechanisms enhance optimization stability in heterogeneous environments, they operate reactively by constraining local updates during training.
In contrast, our approach acts at the federation design level. By selecting clients that exhibit globally consistent correlation structures, we construct federations in which local objectives are aligned with the global objective. As a result, the aggregated updates exhibit limited drift without requiring additional regularization terms. This proactive structural alignment allows unconstrained local progress while maintaining global consistency. This, in turn, preserves utility without introducing optimization penalties.

From a fairness perspective, as expected FedProx and SCAFFOLD do not have a clear impact on group fairness, since they are not primarily designed for this purpose. Therefore, our optimal federation with \FedAvg{} outperforms both following the EOD and SPD measures.

\subsection{Comparison with Reactive Sampling Strategies.}
We compare optimal federations trained with \FedAvg{} against uniformly sampled federations trained with three reactive client-sampling methods: ClusterFL~\cite{Ouyang2021ClusterFL}, UCB-CS~\cite{cho2020bandit}, and FedSampling~\cite{qi2023fedsampling}. These methods implement biased client-selection strategies aimed at accelerating convergence and reducing the final optimization loss. Specifically, ClusterFL and UCB-CS increase the selection probability of clients exhibiting higher local losses, while FedSampling prioritizes clients with larger local datasets. The underlying intuition is that emphasizing such clients focuses the global optimization process on either underperforming local objectives or participants with greater learning capacity, thereby improving training efficiency and reducing the global objective more rapidly. Figures~\ref{fig:exp_1_k_5} and~\ref{fig:exp_2_k_5} report the corresponding results.

\textbf{Findings.}
From a predictive performance perspective, reactive sampling strategies such as ClusterFL, UCB-CS, and FedSampling improve convergence speed compared to uniform random sampling with \FedAvg{}. By prioritizing clients with higher losses or larger datasets, these approaches concentrate training effort on clients that either contribute more information or exhibit larger optimization gaps. These targeted selections accelerate early-stage learning and reduce the overall training loss.

However, in our experiments, these reactive methods still do not outperform the performance achieved by our optimally constructed federations trained with vanilla \FedAvg{}. While biased sampling can partially compensate for heterogeneity by dynamically adjusting the client selection probabilities, it remains fundamentally reactive. The server must continuously observe client behavior (\eg{}, loss values or dataset sizes) and adapt the sampling distribution during training. In contrast, our approach addresses heterogeneity at the federation construction stage. By proactively selecting clients whose data distributions exhibit strong correlation structures, the resulting federation naturally induces local objectives that are aligned with the global objective. Consequently, the aggregated updates exhibit limited conflict, allowing stable and efficient optimization without requiring dynamic sampling biases.

Regarding group fairness, since these methods are not specifically designed to address fairness, they exhibit unstable and sporadic behavior throughout training (in terms of SPD and EOD), and almost consistently perform worse than our optimally constructed federations trained with \FedAvg{}.

\section{Conclusion}
This work suggests and demonstrates that proactive federation search constitutes a promising and largely unexplored design dimension for federated learning systems, providing significant improvement potential along the efficiency, privacy, and fairness axes. Specifically, we show that differentially private statistical summaries combined with an offline optimization phase performed at the server side enable the construction of optimal federations with respect to utility and group fairness, with a limited privacy exposure. 

Our perspectives involve extending our approach to image data using semantic feature extractors and addressing related federated data valuation problems, such as finding the smallest statistically representative federations among larger pools of candidate clients for efficient training with limited privacy exposure.

%% file: appendices.tex
\section{Group fairness metrics}
\label{sec:appendix_fairness_metrics}

Group fairness aims to ensure that a model's predictions do not exhibit systematic disparities across groups defined by a sensitive attribute $S$ (\eg{}, sex or race). Let $T$ denote the target label and $\hat{Y}$ the model prediction.

A central notion is \emph{Demographic Parity} (DP), which requires independence between predictions and the sensitive attribute:
\begin{equation*}
\Pr(\hat{Y} = 1 \mid S = s_1) = \Pr(\hat{Y} = 1 \mid S = s_2), \quad \forall s_1, s_2.
\end{equation*}
Violations of DP are typically quantified via the \emph{Statistical Parity Difference} (SPD), defined as the difference in positive prediction rates between groups.

Another widely used notion is \emph{Equal Opportunity} (EO), which requires equal true positive rates across groups:
\begin{equation*}
\Pr(\hat{Y} = 1 \mid T = 1, S = s_1) = \Pr(\hat{Y} = 1 \mid T = 1, S = s_2)
\end{equation*}
Disparities are measured using the \emph{Equal Opportunity Difference} (EOD), defined as the difference in true positive rates between groups.

More generally, group fairness can be viewed as limiting the dependence between $S$ and either the predictions $\hat{Y}$ or the target $T$, conditioned on relevant variables. In this work, we adopt an information-theoretic perspective: fairness violations arise when the sensitive attribute $S$ carries significant information about the target $Y$ or about non-sensitive features used for prediction. This includes both:
(i) \emph{direct bias}, when $S$ is informative of $T$, and 
(ii) \emph{indirect (proxy) bias}, when $S$ is correlated with non-sensitive features that influence $\hat{Y}$.

Accordingly, we evaluate fairness using disparity metrics such as SPD and EOD, while modeling fairness risks through mutual information terms that capture both direct and indirect dependencies between $S$, $T$, and the feature set.

\section{Empirical privacy auditing through membership inference attack}
\label{sec:appendix_dp_audit}
To empirically validate the privacy guarantee at $\epsilon_1 = 1.0$ of our optimal selection framework, we consider the optimal membership inference auditing using hypothesis testing, and the Neyman-Pearson lemma \cite{neyman2026ontheproblem}.
Given a target record $r^\star$, we test the hypothesis:
$$
H_0: r^\star \notin D_i
\quad \text{vs.} \quad
H_1: r^\star \in D_i
$$

Under the Gaussian mechanism, each noisy contingency table cell $\tilde{N}^{X, Y}(x, y)$ corresponding to $r^\star$'s attribute values for $X$ and $Y$ follows:
$$
\tilde{N}^{X, Y}(x, y) \sim
\begin{cases}
\mathcal{N}({N}_{-r^\star}(x, y), \sigma^2) \text{ \,\, if \,\,} & H_0 \\
\mathcal{N}({N}_{-r^\star}(x, y) + 1, \sigma^2) \text{ \,\, if \,\,} & H_1
\end{cases}
$$
where ${N}_{-r^\star}(x, y)$ denotes the exact count of samples with $X=x$, and $Y=y$ in the dataset, without $r^\star$, and is assumed to be known by the attacker (server), along with the differential-privacy noise scale $\sigma$.

The total log-likelihood ratio across all pairs of features $\mathcal{T}$ for which the attacker observes released noisy tables is:
$$
\Lambda_{\text{joint}}(r^\star)
=
\sum_{(X,Y)\in\mathcal{T}}
\log \left(\frac{
\Pr\!\left(\tilde{N}^{X,Y}(x,y)\mid H_1\right)
}{
\Pr\!\left(\tilde{N}^{X,Y}(x,y)\mid H_0\right)
}\right)
$$
Using the Gaussian PDFs, simplifying, and summing over all feature pairs $(X,Y)\in\mathcal{T}$ gives the closed-form joint statistic:
$$
\Lambda_{\text{joint}}(r^\star)
=
\frac{1}{\sigma^2}
\sum_{(X,Y)\in\mathcal{T}}
\left[
\tilde{N}^{X,Y}(x,y)
-
N_{-r^\star}^{X,Y}(x,y)
-
\frac{1}{2}
\right]
$$
The Neyman-Pearson lemma \cite{neyman2026ontheproblem} provides that the optimal decision rule is the test:
$\left[\Lambda_{\text{joint}}(r^\star) > \tau_\alpha \right]$,
where $\tau_\alpha$ is chosen to achieve a target false positive rate $\alpha$.

\subsection*{Empirical validation}
We evaluate the Log-Likelihood Ratio Test (LRT) attack for multiple privacy budgets
$\epsilon_1 \in \{0.1, 0.5, 1.0, 2.0, 5.0, +\infty\}$, with a fixed $\delta = 10^{-5}$. The noise scales used for each ACS task are outlined in Table \ref{tab:sa_results}.
For each benchmark, the noise scale $\sigma$ is calibrated via RDP composition across all released contingency tables. For each task, the audit is performed using $25$k records as the dataset, and $1000$ target records ($r^\star$) uniformly sampled across all states.
We assess the attack using:
(i) the Area Under the ROC Curve (AUC), and
(ii) the True Positive Rate (TPR) at a fixed False Positive Rate of $\mathrm{FPR}=1\%$.
These metrics are computed both in the single-table setting and in the multi-table setting,
highlighting the increased inference power of an adversary with access to multiple noisy tables.
The results are reported in Figure~\ref{fig:dp_audit}.

\begin{figure}[h]
    \centering
    \begin{subfigure}{0.80\linewidth}
        \centering
        \includegraphics[width=\linewidth]{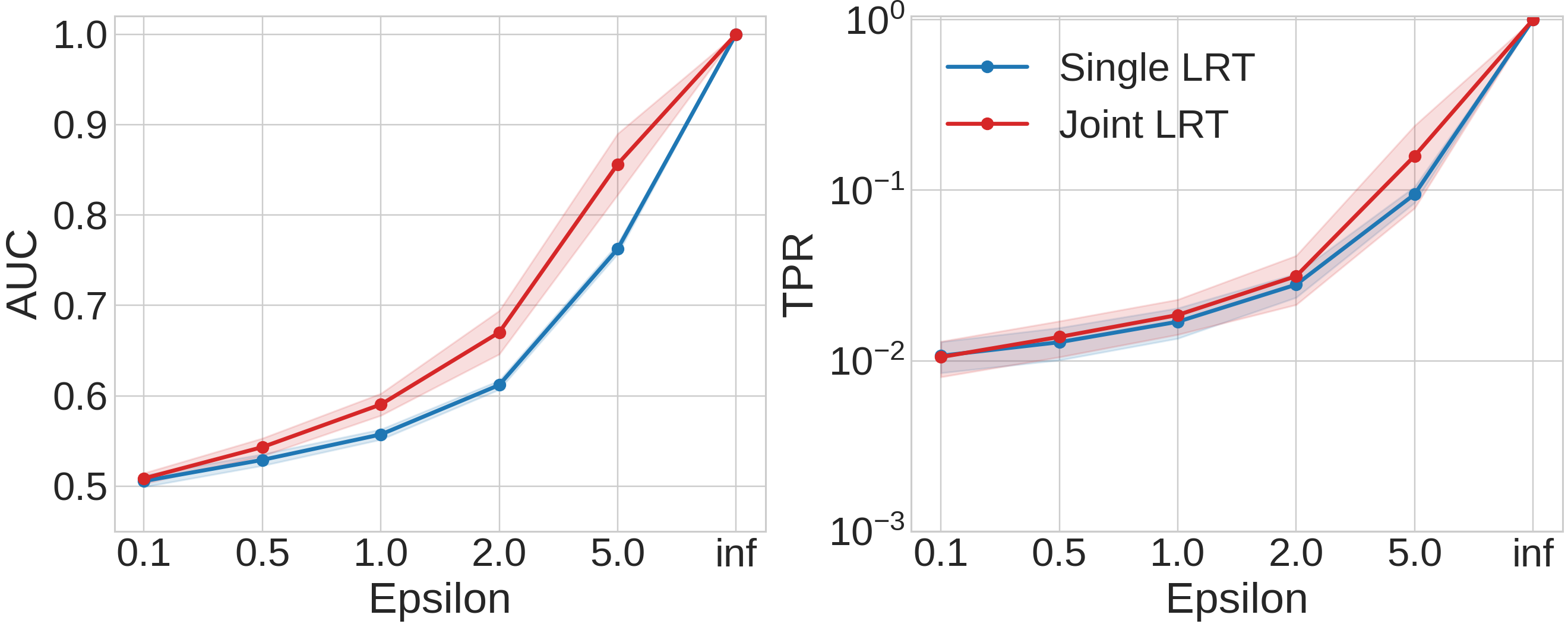}
        \caption{ACSIncome}
        \label{fig:exp2_income}
    \end{subfigure}
    
    \begin{subfigure}{0.80\linewidth}
        \centering
        \includegraphics[width=\linewidth]{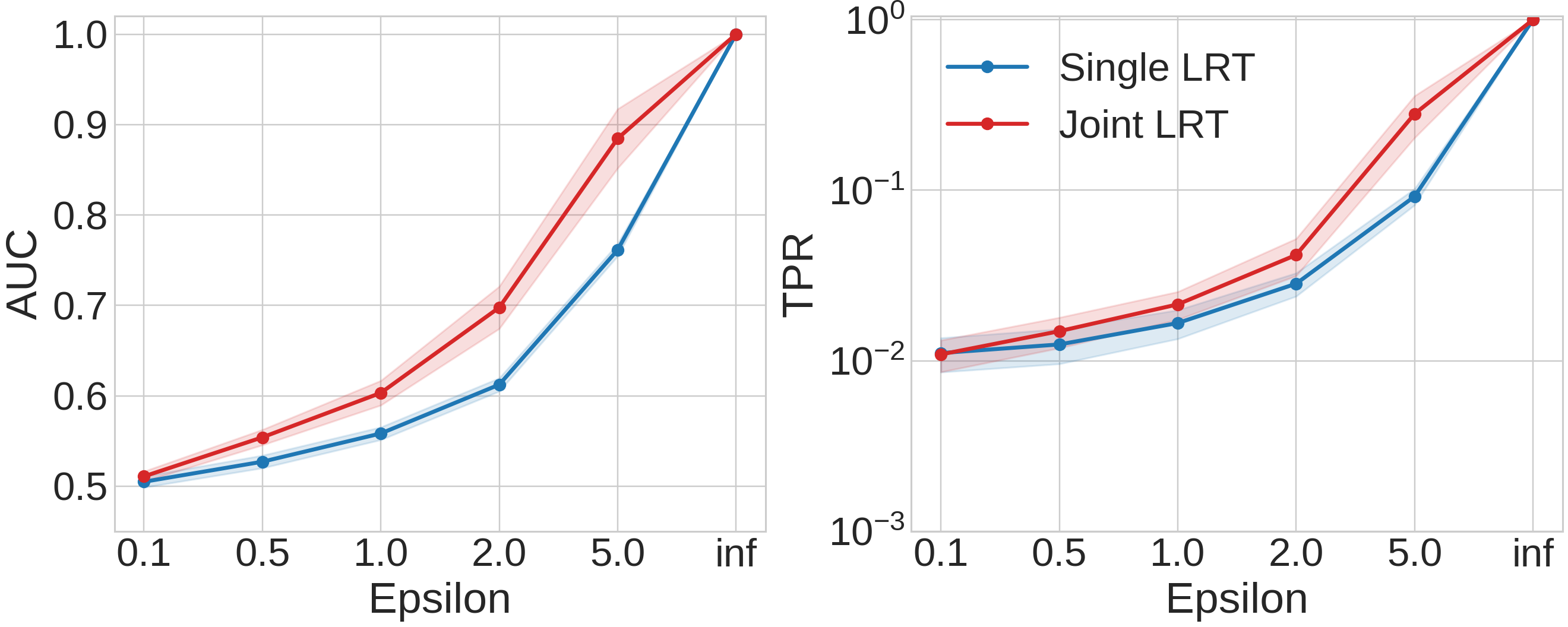}
        \caption{ACSPublicCoverage}
        \label{fig:exp2_public_coverage}
    \end{subfigure}

    \begin{subfigure}{0.80\linewidth}
        \centering
        \includegraphics[width=\linewidth]{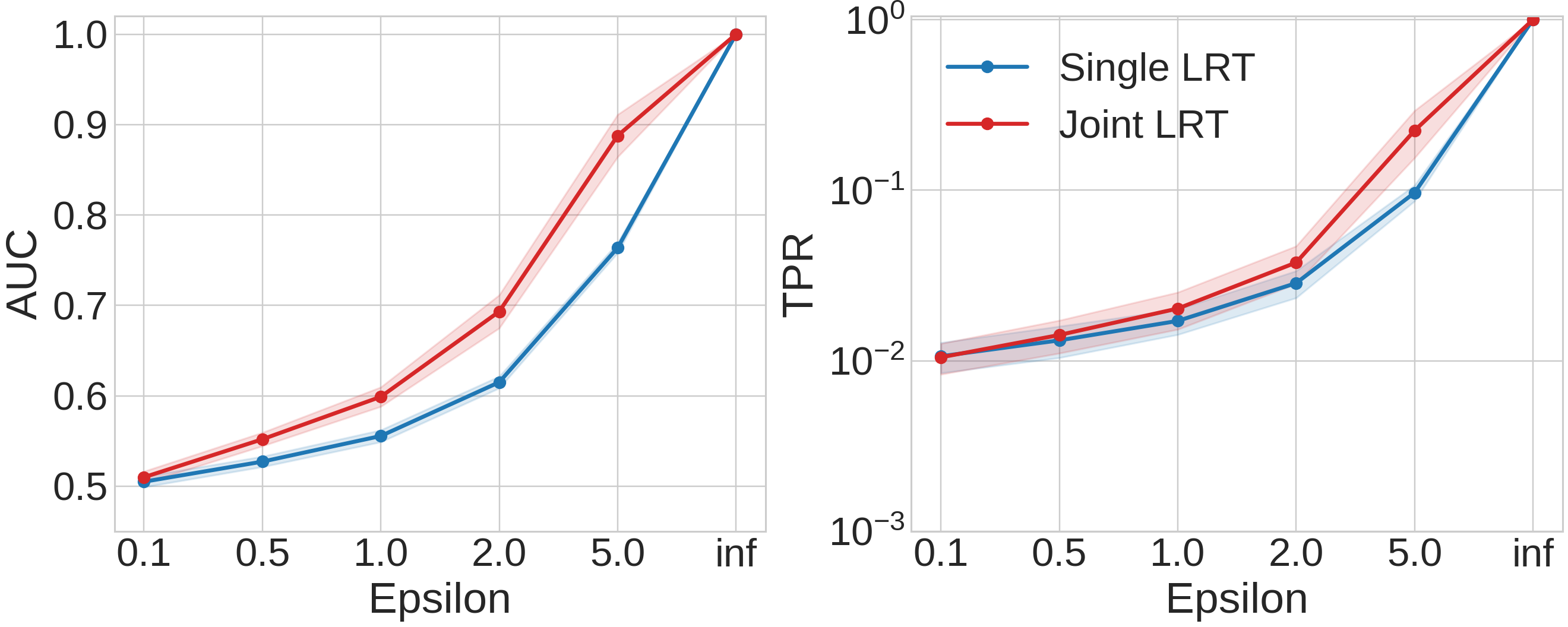}
        \caption{ACSTravelTime}
        \label{fig:exp2_travel_time}
    \end{subfigure}
    
    \begin{subfigure}{0.80\linewidth}
        \centering
        \includegraphics[width=\linewidth]{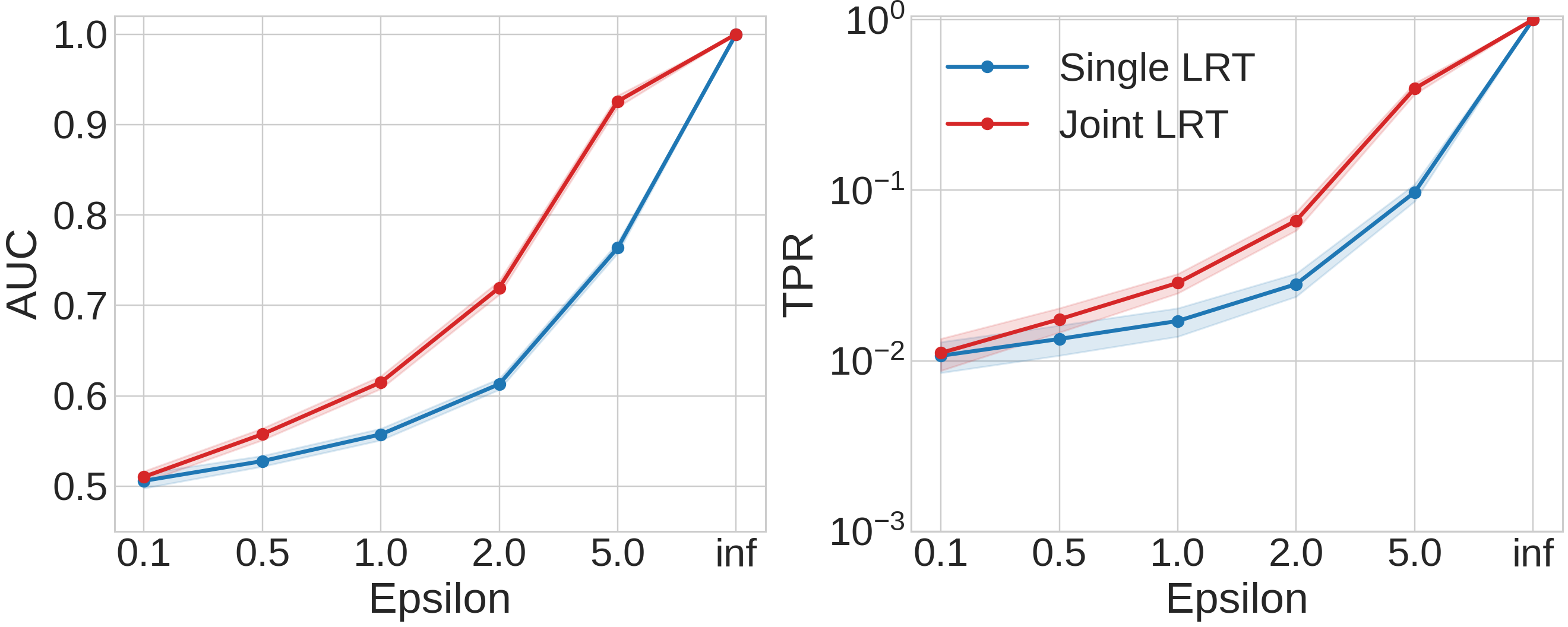}
        \caption{ACSEmployment}
        \label{fig:exp2_employment}
    \end{subfigure}

    \caption{Privacy auditing using Log-likelihood Ratio Test (LRT) for ACSIncome, ACSPublicCoverage, ACSTravelTime, and ACSEmployment, from both a single and multiple contingency tables observation.}
    \label{fig:dp_audit}
\end{figure}

\paragraph*{Findings}
Figure \ref{fig:dp_audit} shows that both single and multiple tables LRT performances (AUC and TPR) fall as low as a random guess at $\epsilon_1 = 0.1$, and achieve perfect performance for $\epsilon_1 = \infty$. The steepest attack improvements occur in the regime $\epsilon_1 \in [1.0, 5.0]$ where the noise standard deviation drops significantly faster due to the squared privacy budget denominator in the noise variance calibration $(\sigma^2 \propto 1/\epsilon^2)$. We therefore adopt $\epsilon_1 = 1.0$ 
as our operating point. Indeed, it lies at the privacy-utility sweet-spot where the noise is large enough to keep the attack close to random guessing (at most an AUC of $0.6$ is observed for ACSEmployment and ACSTravelTime), yet the privacy budget remains 
practically meaningful for our optimal federation search.

\section{Additional experiments}
\label{sec:appendix_additional_exp}
In this section, we present the results of the training of federations of size $10$ and $15$ with a similar experimental setting as presented in Section \ref{sec:exp}. Results are reported in Figures \ref{fig:exp_1_k_10}, and \ref{fig:exp_2_k_10} for federations of size $10$, and Figures \ref{fig:exp_1_k_15} and \ref{fig:exp_2_k_15} report the results for federations of size $15$.

\paragraph*{Findings}
The additional experiments for federation sizes $(k=5)$ and $(k=15)$ provide a complementary perspective on how the proposed proactive selection mechanism behaves across different regimes of aggregation, without altering the core trends observed in the main text. For smaller federations $(k=5)$, the results show that suboptimal or random choices lead to highly variable performance and fairness outcomes even when reactive aggregation or sampling strategies are employed, whereas the \PFL{}-driven selection consistently identifies compact federations that preserve strong predictive signals while avoiding concentrated sources of bias. This emphasizes the sensitivity of small-scale federations to statistical heterogeneity and reinforces the benefit of explicitly optimizing cross-feature dependencies prior to training. In contrast, for larger federations $(k=15)$, the results illustrate a smoothing effect induced by aggregation, where the increased number of clients naturally reduces variance in both utility and fairness metrics. Nevertheless, even in this more stable regime, the proactive selection strategy maintains a consistent advantage over random or reactive baselines, showing that the gains are not solely due to variance reduction but stem from a more favorable alignment of data distributions within the optimally found federations. Notably, the performance and fairness gaps persist, which aligns with the $k=5$ experiments. These experiments confirm that the proposed framework is effective across both low and high-cardinality federation settings.

\begin{figure}[h]
    \centering
    \begin{subfigure}{0.85\linewidth}
        \centering
        \includegraphics[width=\linewidth]{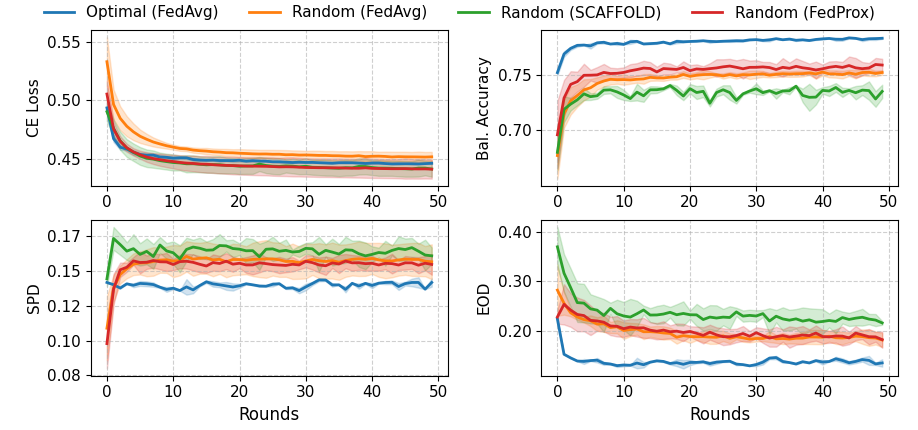}
        \caption{ACSIncome}
        \label{fig:exp2_income}
    \end{subfigure}

    \begin{subfigure}{0.85\linewidth}
        \centering
        \includegraphics[width=\linewidth]{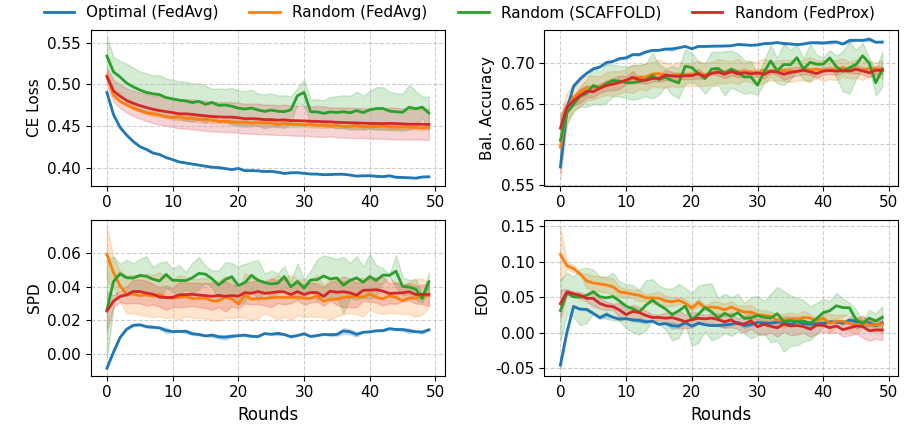}
        \caption{PublicCoverage}
        \label{fig:exp2_public_coverage}
    \end{subfigure}

    \begin{subfigure}{0.85\linewidth}
        \centering
        \includegraphics[width=\linewidth]{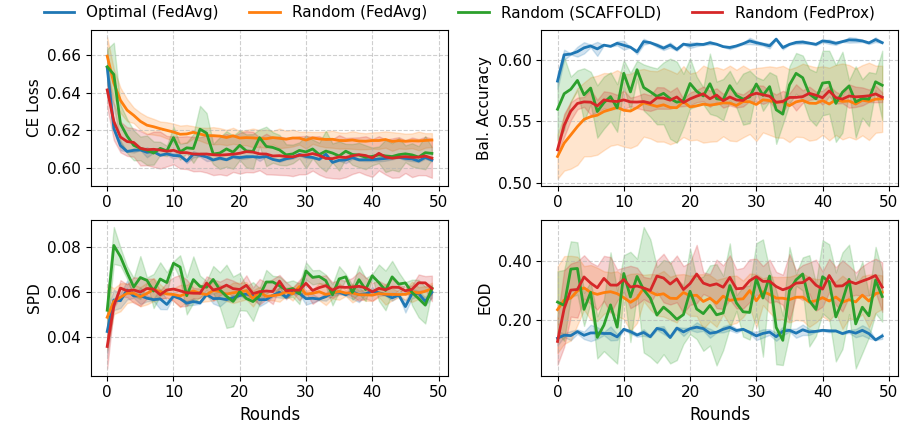}
        \caption{TravelTime}
        \label{fig:exp2_travel_time}
    \end{subfigure}

    \begin{subfigure}{0.85\linewidth}
        \centering
        \includegraphics[width=\linewidth]{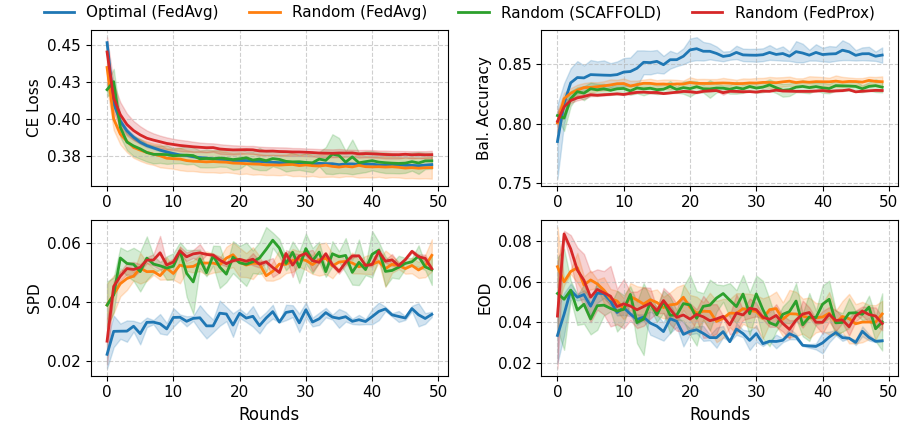}
        \caption{ACSEmployment}
        \label{fig:exp2_employment}
    \end{subfigure}

    \caption{Convergence and fairness of our optimal federations compared with random ones, with FedAvg, FedProx, and SCAFFOLD, for ACSIncome, ACSEmployment, ACSPublicCoverage, and ACSTravelTime with $k=10$.}
    \label{fig:exp_1_k_10}
\end{figure}

\begin{figure}[h]
    \centering
    \begin{subfigure}{0.90\linewidth}
        \centering
        \includegraphics[width=\linewidth]{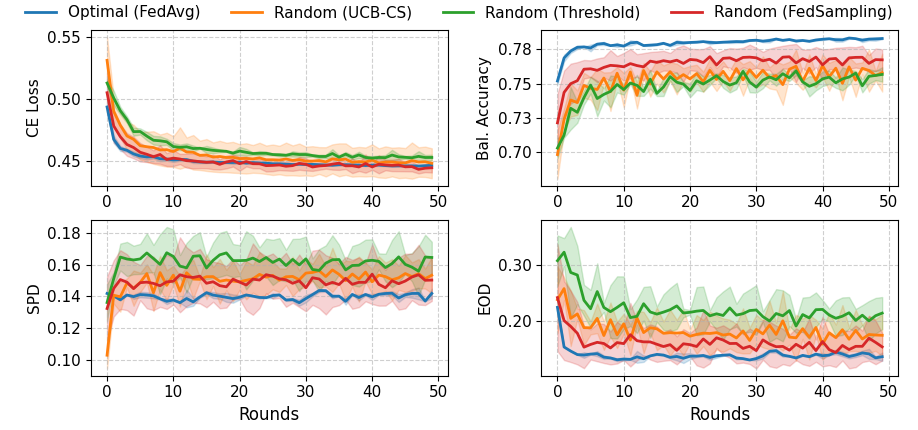}
        \caption{ACSIncome}
        \label{fig:exp2_income}
    \end{subfigure}

    \begin{subfigure}{0.90\linewidth}
        \centering
        \includegraphics[width=\linewidth]{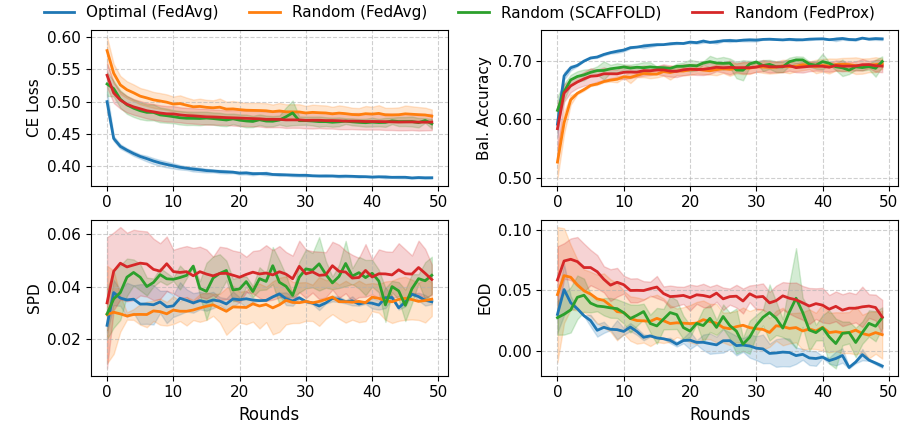}
        \caption{PublicCoverage}
        \label{fig:exp2_public_coverage}
    \end{subfigure}

    \begin{subfigure}{0.90\linewidth}
        \centering
        \includegraphics[width=\linewidth]{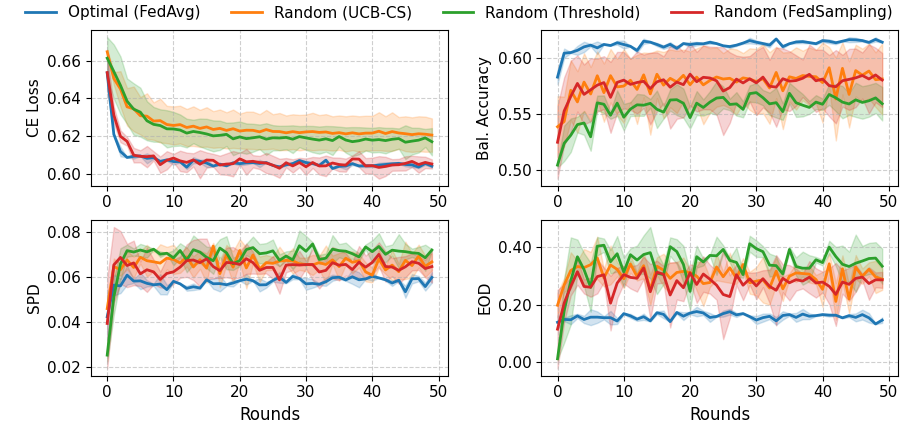}
        \caption{TravelTime}
        \label{fig:exp2_travel_time}
    \end{subfigure}

    \begin{subfigure}{0.90\linewidth}
        \centering
        \includegraphics[width=\linewidth]{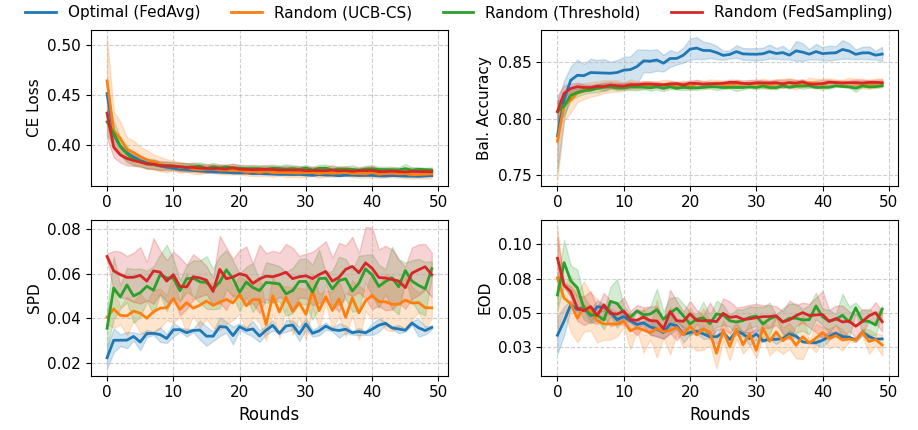}
        \caption{ACSEmployment}
        \label{fig:exp2_employment}
    \end{subfigure}

    \caption{Convergence and fairness of our optimal federations compared with random ones, with sampling strategies: UCB-CS \cite{cho2020bandit}, FedSampling \cite{qi2023fedsampling}, and Threshold-based participation \cite{ribero2020communication}, for ACSIncome, ACSEmployment, ACSPublicCoverage, and ACSTravelTime with $k=10$.}
    \label{fig:exp_2_k_10}
\end{figure}

\begin{figure*}[h]
    \centering
    \begin{subfigure}{0.45\linewidth}
        \centering
        \includegraphics[width=\linewidth]{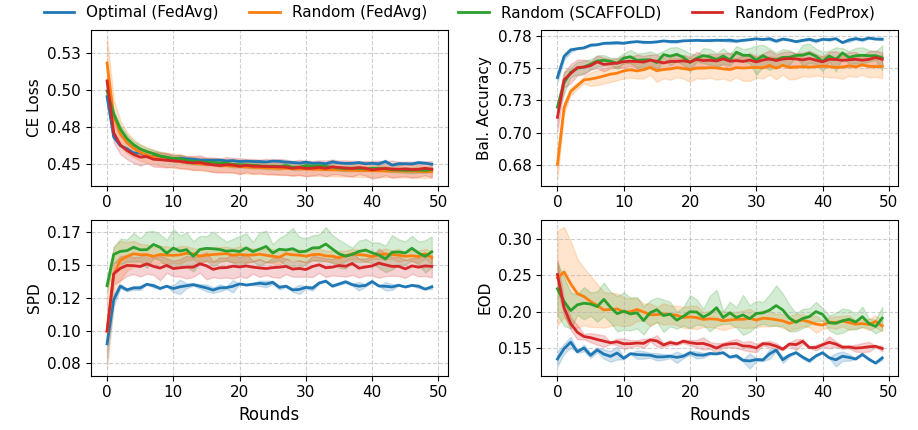}
        \caption{ACSIncome}
        \label{fig:exp2_income}
    \end{subfigure}
    \hspace{0.5cm}
    \begin{subfigure}{0.45\linewidth}
        \centering
        \includegraphics[width=\linewidth]{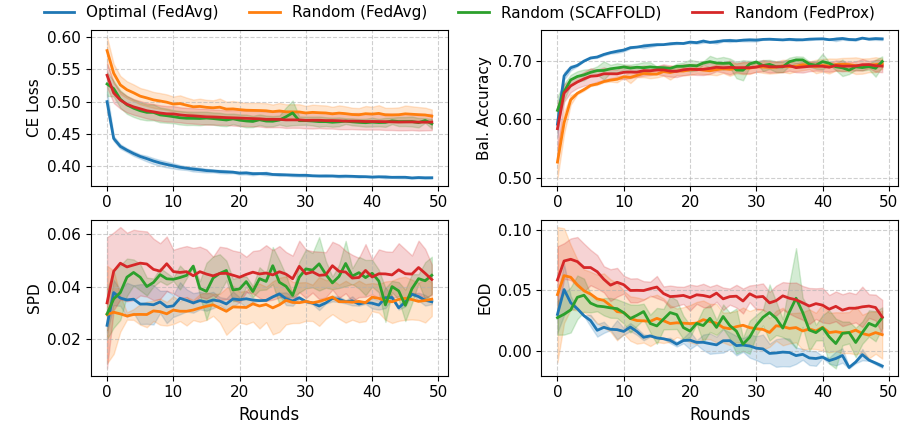}
        \caption{PublicCoverage}
        \label{fig:exp2_public_coverage}
    \end{subfigure}

    \begin{subfigure}{0.45\linewidth}
        \centering
        \includegraphics[width=\linewidth]{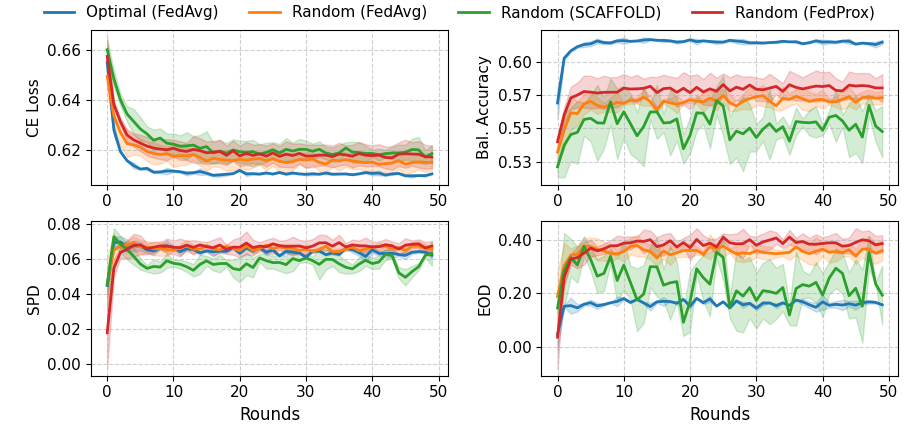}
        \caption{TravelTime}
        \label{fig:exp2_travel_time}
    \end{subfigure}
    \hspace{0.5cm}
    \begin{subfigure}{0.45\linewidth}
        \centering
        \includegraphics[width=\linewidth]{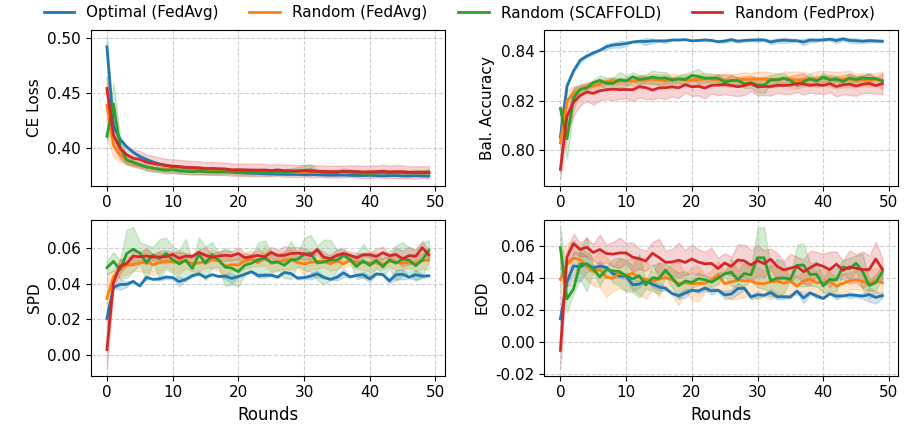}
        \caption{ACSEmployment}
        \label{fig:exp2_employment}
    \end{subfigure}

    \caption{Convergence and fairness of our optimal federations compared with random ones, with FedAvg, FedProx, and SCAFFOLD, for ACSIncome, ACSEmployment, ACSPublicCoverage, and ACSTravelTime with $k=15$.}
    \label{fig:exp_1_k_15}
\end{figure*}

\begin{figure*}[h]
    \centering
    \begin{subfigure}{0.45\linewidth}
        \centering
        \includegraphics[width=\linewidth]{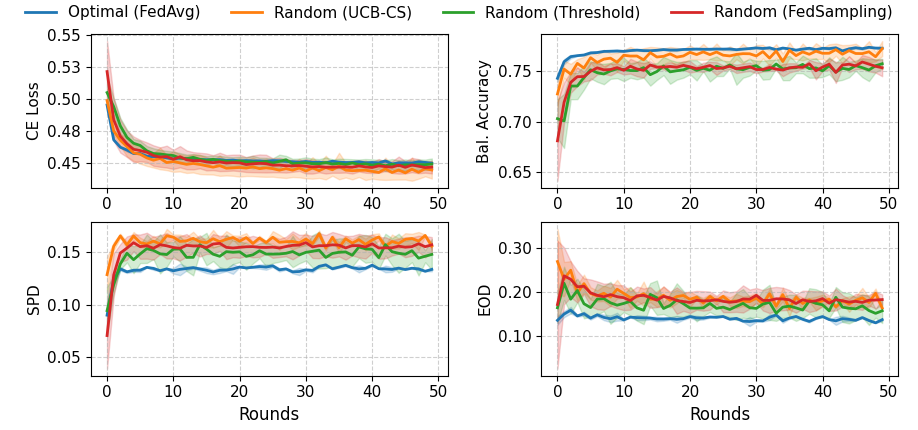}
        \caption{ACSIncome}
        \label{fig:exp2_income}
    \end{subfigure}
    \hspace{0.5cm}
    \begin{subfigure}{0.45\linewidth}
        \centering
        \includegraphics[width=\linewidth]{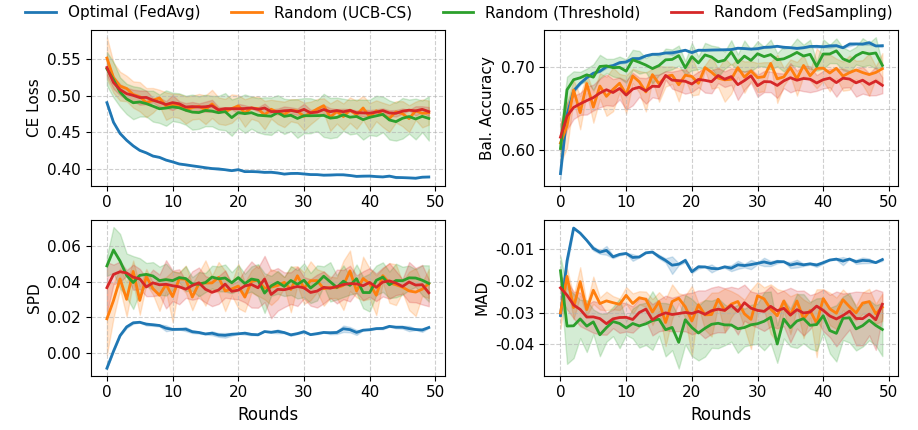}
        \caption{PublicCoverage}
        \label{fig:exp2_public_coverage}
    \end{subfigure}

    \begin{subfigure}{0.45\linewidth}
        \centering
        \includegraphics[width=\linewidth]{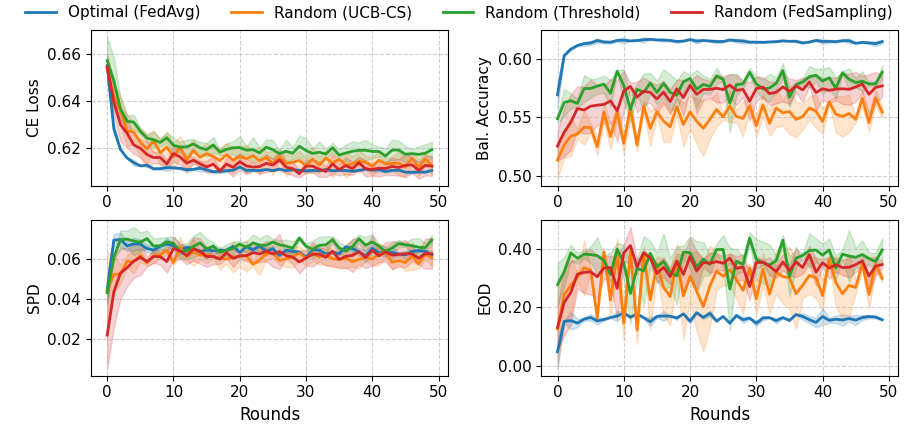}
        \caption{TravelTime}
        \label{fig:exp2_travel_time}
    \end{subfigure}
    \hspace{0.5cm}
    \begin{subfigure}{0.45\linewidth}
        \centering
        \includegraphics[width=\linewidth]{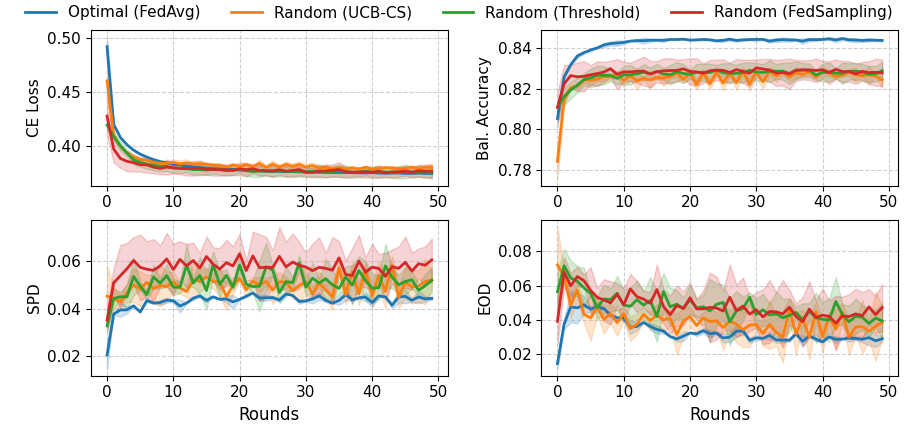}
        \caption{ACSEmployment}
        \label{fig:exp2_employment}
    \end{subfigure}

    \caption{Convergence and fairness of our optimal federations compared with random ones, with sampling strategies: UCB-CS \cite{cho2020bandit}, FedSampling \cite{qi2023fedsampling}, and Threshold-based participation \cite{ribero2020communication}, for ACSIncome, ACSEmployment, ACSPublicCoverage, and ACSTravelTime with $k=15$.}
    \label{fig:exp_2_k_15}
\end{figure*}